\let\NAT@parse\undefined
\definecolor{linkcolor}{rgb}{0.45,0.05,0.05}
\definecolor{citecolor}{rgb}{0.05,0.45,0.45}
\definecolor{urlcolor}{rgb}{0.05,0.05,0.45}
\newif\ifmoveprooftoend
\newcommand\showDeferredProofs{}
                \edef\next{\noexpand\g@addto@macro\noexpand\showDeferredProofs{%
                        \noexpand\begin{proof}[#1]\unexpanded\expandafter{\BODY}\noexpand\end{proof}}}
\BODY\end{proof}
\newcommand{\gobble}[1]{}
\newcommand{\gobblexor}[2]{#2} 
\theoremstyle{definition} 
\newtheorem{x}{X}
\newtheorem{definition}[x]{Definition} 
\newtheorem{corollary}[x]{Corollary}
\newtheorem{lemma}[x]{Lemma} 
\newtheorem{theorem}[x]{Theorem} 
\newtheorem{property}[x]{Property}
\newtheorem{construction}[x]{Construction}
\definecolor{grey}{rgb}{0.5,0.5,0.5}
\definecolor{darkgrey}{rgb}{0.15,0.15,0.15}
\definecolor{darkblue}{rgb}{0.05,0.05,0.5}
\definecolor{darkgreen}{rgb}{0.05,0.5,0.05}
\definecolor{darkestgreen}{rgb}{0.5,0.0,0.5}
\definecolor{darkorange}{rgb}{0.5,0.25,0.00}
\providecommand{\defemp}[1]{\emph{#1}} 
\newcounter{tecounter}
\providecommand{\zc}[3]{\ensuremath{({#1}; {#2})}}
\providecommand{\extension}[2]{\ensuremath{\mathcal{L}_{#1}(#2)}}
\newcommand*{\probleminternal}[4]{
    {\small
	\par
	\medskip
	\noindent\fbox{\parbox{0.98\columnwidth}{
		\hspace*{-1.9em}\textbf{#4:} {#1} \\[0.05in]
		\renewcommand{\tabcolsep}{2pt}
		\begin{tabularx}{0.97\linewidth}{rX}
			~\emph{Input:} & #2 \\
			~\emph{Output:} & #3
		\end{tabularx}
	}}}
	\par
	\medskip
	\par
}
\newcommand*{\probleminternalwithnote}[5]{
    {\small
	\par
	\medskip
	\noindent\fbox{\parbox{0.98\columnwidth}{
		\hspace*{-1.9em}\textbf{#5:} {#1} \\[0.05in]
		\renewcommand{\tabcolsep}{2pt}
		\begin{tabularx}{0.97\linewidth}{rX}
			~\emph{Input:} & #2 \\
			~\emph{Output:} & #3 \\
   \multicolumn{2}{l}{#4}
		\end{tabularx}
        }}}
	\par
	\medskip
	\par
}
\newcommand*{\ourproblem}[3]{\probleminternal{#1}{#2}{#3}{\quad~ Problem}}
\newcommand*{\ourproblemwithnote}[4]{\probleminternalwithnote{#1}{#2}{#3}{#4}{\quad~ Problem}}
\providecommand{\compatible}{\ensuremath{\sim_{c}}}
\providecommand{\up}{\ensuremath{\mathrlap{\raisebox{1pt}{\ensuremath{{}^{\,\phantom{\ast}}}}}{\,\prec\,}}}
\providecommand{\strictupstream}{\ensuremath{\mathrlap{\raisebox{1pt}{\ensuremath{{}^{\,\ast}}}}{\,\prec\,}}}
\newcommand{\scalingsize}{0.85}
\newcommand{\fmp}{\scalebox{\scalingsize}{{\sc FM}}\xspace}
\newcommand{\fm}[1]{\scalebox{\scalingsize}{{\sc FM}}$({#1})$\xspace}
\newcommand{\mzccp}{\scalebox{\scalingsize}{\sc MZCC}\xspace}
\newcommand{\mzcc}[2]{\scalebox{\scalingsize}{\sc MZCC}$({#1},{#2})$\xspace}
\providecommand{\nptext}{{\footnotesize\textsf{NP}}}
\providecommand{\nphard}{{\nptext -hard}\xspace}
\providecommand{\coll}[1]{\ensuremath{\mathbb{#1}}}
\providecommand{\struct}[1]{\ensuremath{\mathscr{#1}}\xspace}
\providecommand{\cover}[1]{\coll{#1}}
\providecommand{\neighbor}[2]{\ensuremath{\mathcal{N}_{#1}(#2)}}
\newcommand\superrestr[3]{{
  \left.\kern-\nulldelimiterspace 
  #1 
  \vphantom{\big|} 
  \right|_{#2}^{#3} 
  }}
\newcommand\restr[2]{{
  \left.\kern-\nulldelimiterspace 
  #1 
  \vphantom{\big|} 
  \right|_{#2} 
  }}
\newcommand{\citep}[1]{\cite{#1}}
\newcommand{\citet}[1]{\cite{#1}}
\renewcommand{\emptyset}{\varnothing}
\newcommand*\bigcdot{\mathpalette\bigcdot@{.7}}
\newcommand*\bigcdot@[2]{\mathbin{\vcenter{\hbox{\scalebox{#2}{$\m@th#1\bullet$}}}}}
\providecommand{\nptext}{{\footnotesize\textsf{NP}}}
\providecommand{\nphard}{{\nptext -hard}\xspace}
\providecommand{\cn}{comparable neighborhoods\xspace}
\newcommand\shortvspace[1]{\ifdefined\arxiv\else \vspace*{#1}\fi}
\newcommand{\fpt}{FPT\xspace}
\newcommand\pr[2]{\ensuremath{\{{#1}, {#2}\}}}
\newcommand{\zipcoll}[1]{\textcolor{teal!50!black}{\ensuremath{\mathcal{#1}}}}
\newcommand{\tgt}{\scalebox{0.5}{\text{tgt}}}
\newcommand{\src}{\scalebox{0.5}{\text{src}}}
\newcommand{\Zpairs}{\ensuremath{{Z^2}}}
\newcommand{\ZpairsT}{\ensuremath{Z_{\tgt}^2}}
\newcommand{\ZpairsS}{\ensuremath{Z_{\src}^2}}
\newcommand{\prior}{{\scalebox{0.7}{\text{prior}}}}
\newcommand{\cpy}{{\scalebox{0.7}{\text{copy}}}}
\newcommand{\extra}{{\scalebox{0.7}{\text{new}}}}
\providecommand{\expand}[1]{{\operatorname{expand}}({#1})}
\providecommand{\cri}[1]{{\footnotesize\textbf{#1.}}}
\providecommand{\newvertexSet}[1]{\ensuremath{\mathsf{v}_{\extra}^{\scalebox{0.67}{${#1}$}}}}
\providecommand{\newvertexS}[1]{\ensuremath{\mathsf{v}_{\extra}^{\scalebox{0.67}{$\!\{{#1}\}\!$}}}}
\providecommand{\newvertex}[2]{\ensuremath{\mathsf{v}_{\extra}^{\scalebox{0.67}{$\!\{{#1},{#2}\}\!$}}}}
\providecommand{\minion}{\scalebox{0.5}{\text{ON}}}
\providecommand{\minioff}{\scalebox{0.5}{\text{OFF}}}
\providecommand{\Selon}[2]{\coll{#1}^{\minion}_{#2}}
\providecommand{\Seloff}[2]{\coll{#1}^{\minioff}_{#2}}
\providecommand{\hyp}{\circ}
\definecolor{domaincolor}{rgb}{0.0,0.0,0.4}
\newcommand{\D}{\ensuremath{{\color{domaincolor}D}}\xspace}
\newcommand{\R}{\ensuremath{{\color{domaincolor}R}}\xspace}
\newcommand{\uR}{\ensuremath{{\color{domaincolor}\underline{R}}}\xspace}
\newcommand*{\Dbar}{}%
\DeclareRobustCommand*{\Dbar}{%
  \mathpalette\@Dbar{}%
}
\newcommand*{\@Dbar}[2]{%
  \sbox0{$#1J\m@th$}%
  \sbox2{$#1\D\m@th$}%
  \rlap{%
    \hbox to\wd2{%
      \hfill
      \textcolor{domaincolor}{$\overline{%
        \vrule width 0pt height\ht0 %
        \kern\wd0 %
      }$}%
    }%
  }%
  \copy2 %
}
\newcommand{\eD}{\Dbar}
\newcommand{\eSelon}[2]{\Selon{#1}{#2}}
\newcommand{\on}{\textsc{on}\xspace}
\newcommand{\off}{\textsc{off}\xspace}
\newcommand{\onpairs}{\on pairs\xspace}
\newcommand{\offpairs}{\off pairs\xspace}
\newcounter{sublOne}
\newcounter{sublTwo}
\newcommand{\srcSet}[1]{\ensuremath{\mathscr{S}_{#1}}}
\newcommand{\destSet}[1]{\ensuremath{\mathscr{D}_{#1}}}
\newcommand{\interiorZ}{\ensuremath{\mathcal{I}}}
\newcommand{\iterIdx}[1]{\ensuremath{^{({#1})}}}
\newcommand{\upSet}[2]{\ensuremath{\mathcal{U}_{#1}}({#2})}
\newcommand{\downSet}[2]{\ensuremath{\mathcal{D}_{#1}}({#2})}
\newcommand{\prG}[1]{\ensuremath{\mathbb{#1}}}
\newcommand{\fxn}[1]{\ensuremath{\textcolor{purple!50!black}{\textrm{\textsc{#1}}}}}
\newcommand{\enum}{\fxn{EnumDS}}
\newcommand{\cyceq}{\ensuremath{\equiv}}
\def\BibTeX{{\rm B\kern-.05em{\sc i\kern-.025em b}\kern-.08em
    T\kern-.1667em\lower.7ex\hbox{E}\kern-.125emX}}
\title{A fixed-parameter tractable algorithm for combinatorial filter reduction}
\author{Yulin Zhang\inst{1} \and Dylan A. Shell\inst{2}}
\institute{
Amazon Robotics, North Reading, MA, USA.
\email{\tt\small zhangyl@amazon.com}\and
Texas A\&M University, College Station, TX, USA.
\email{\tt\small dshell@tamu.edu}
}
\date{}
\begin{document}

\maketitle


\begin{abstract}
What is the minimal information that a robot must retain to achieve its task?
To design economical robots, the literature dealing with reduction of combinatorial filters approaches this problem algorithmically.
As lossless state compression is \nphard, prior work has examined, along with minimization algorithms, a variety of special cases in which specific properties enable efficient solution.
Complementing those findings, this paper refines the present understanding from the perspective of parameterized complexity. 
We give a fixed-parameter tractable algorithm for the general reduction problem by exploiting a transformation into clique covering. 
The transformation introduces new constraints that arise from sequential
dependencies encoded within the input filter\,---\,some of these constraints can be repaired, 
others are treated through enumeration.
Through this approach, we identify parameters affecting filter reduction
that are based upon inter-constraint couplings (expressed as a notion of their height and width), 
which add to the structural parameters present in the unconstrained
problem of minimal clique covering.
Compared with existing work, we 
precisely identify and quantitatively characterize those features that contribute to the problem's hardness:
given a problem instance, the combinatorial core may be a
fraction of the instance's full size, with a small subset of
constraints needing to be considered, and even those may have directly 
identifiable couplings that collapse degrees of freedom in the
enumeration.

\end{abstract}

\newcommand\blfootnote[1]{%
  \begingroup
  \renewcommand\thefootnote{}\footnote{#1}%
  \addtocounter{footnote}{-1}%
  \endgroup
}

\blfootnote{This work was done prior to Y. Zhang joining Amazon Robotics.}

\vspace*{-28pt}
\section{Introduction}
\vspace*{-4pt}

The design of robots that are simple is important not only because small resource footprints often translate into money saved, but also because parsimony can be enlightening.
In fact, the pursuit of minimalism has a long history in robotics (cf.~\cite{connell1990minimalist,goldberg96minimalism,mason93kicking}).
But the elegance in that 
prior work was obtained, mostly, through human nous rather than computational tools---our work pursues the latter avenue.

Combinatorial filters are a general and abstract model of stateful devices that take a
stream of sensor readings as input and, processing sequentially, produce a
stream of outputs.  They have found direct application in describing estimators (e.g.,
tracking agents in an environment\,\cite{tovar2014combinatorial}) and
also as representations for sensor-based plans (e.g., for navigation, or manipulation for part
orientating\,\cite{okane17concise}).
Figure~\ref{fig:examples} provides specific concrete examples from the literature showing
different scenarios and the associated filters.
In the context of the present paper, what is interesting about
combinatorial filters is that (unlike, say, Bayesian estimators)
they are objects which themselves can be modified by algorithms.
In our view,
the fundamental information processing task faced by a robot can oftentimes be abstractly
represented via a combinatorial filter, so specific obstructions to
tractability have significance beyond mere applications; for instance, they
speak to the challenge of niche fit as optimization
under resource constraints.

\newcommand{\exfig}{
\begin{figure}[t]
\vspace*{3pt}
\begin{subfigure}[b]{0.32\textwidth}
\centering
\includegraphics[width=0.7\linewidth]{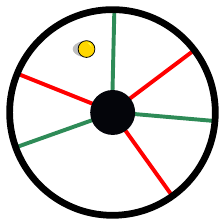}

(a)

\end{subfigure}
\hfill
\begin{subfigure}[b]{0.32\textwidth}
\centering
\includegraphics[width=0.9\linewidth]{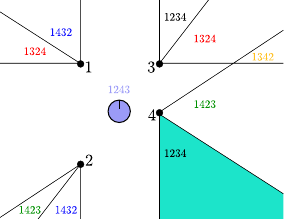}

(c)

\end{subfigure}
\hfill
\begin{subfigure}[b]{0.32\textwidth}
\centering
\includegraphics[width=0.85\linewidth]{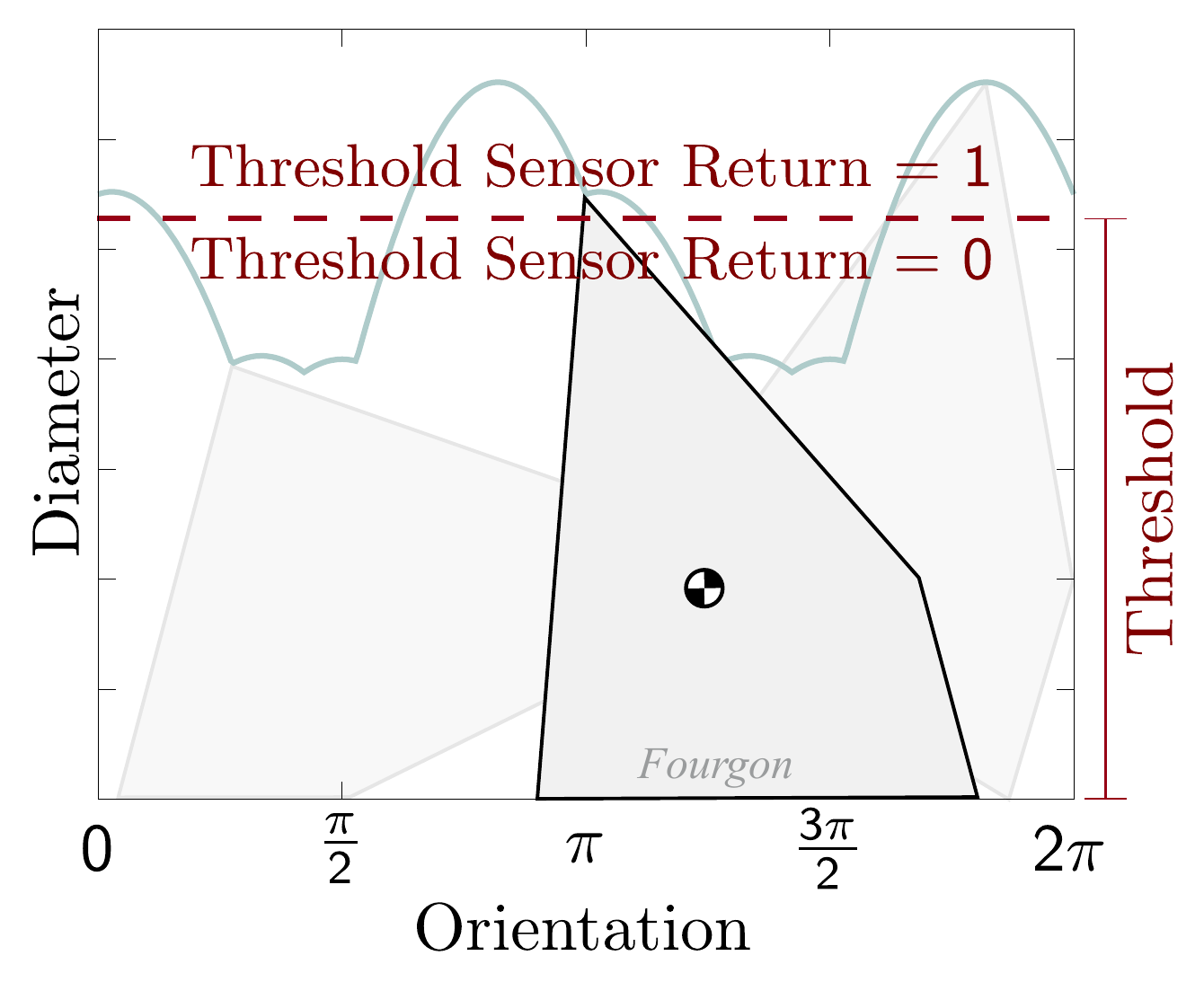}

(e)

\end{subfigure}
\phantom{space}\\[-4pt]
\begin{subfigure}[c]{0.32\textwidth}
\centering
\includegraphics[width=0.8\linewidth]{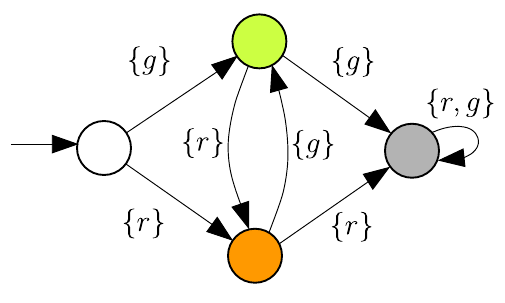}

(b)
\end{subfigure}
\hfill
\begin{subfigure}[c]{0.32\textwidth}
\centering
\includegraphics[width=\linewidth]{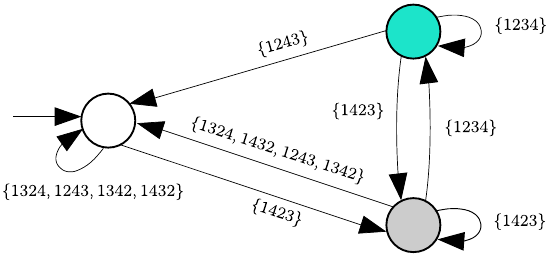}

(d)
\end{subfigure}
\hfill
\begin{subfigure}[c]{0.32\textwidth}
\centering
\includegraphics[width=\linewidth]{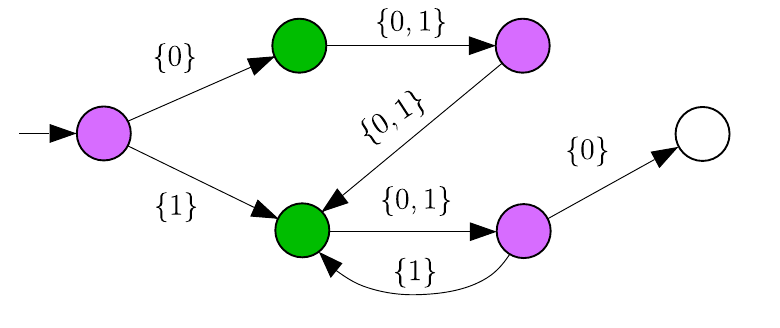}
(f)
\end{subfigure}
\vspace*{-6pt}
\caption{Diverse examples of combinatorial filters.
Sakcak et~al.\ \cite{sakcak2024mathematical} consider a circular environment like that shown in (a) with
two types of break-beam sensors that trigger when crossed by an agent.
They derive the 4-state filter, depicted in (b), that outputs grey when it detects that strictly clockwise/anticlockwise motion has been violated.
Scenario (c) is re-drawn from \cite{rahmani21equivalence}, where the robot observes only the cyclic ordering
of 4 landmarks. Their 3-state filter, depicted in (d), 
determines definitively whether the robot's current location is within the cyan region. 
In (e), the task of orienting a polygon with a squeeze-gripper, based upon~\cite{Taylor1988SensorbasedMP},
is expressed as feedback plan (f),  
with green encoding the action of rotating the gripper by $65^\circ$, and purple the squeeze action (reproduced from~\cite{zhang22sso}). \label{fig:examples}}
\vspace*{-20pt}
\end{figure}
}

\exfig

\vspace*{-12pt}
\subsection{Related Work}
\vspace*{-4pt}
The idea of determining fundamental limits, such as necessary
information and performance bounds, is a topic 
receiving renewed attention (for instance, see \cite{majumdar2022,sakcak2024mathematical}), 
especially when such analysis can be conducted via automated means.
A key problem is that of taking a combinatorial filter and 
compressing it to form an equivalent filter, but with the fewest
states. 
Regrettably,
this is \nphard~\cite{okane17concise}.
A direction of work has sought to identify
special cases~\cite{saberifar17special,zhang23general},
to employ ILP and SAT techniques~\cite{rahmani2020integer,zhang2021accelerating},
and to focus on special types of reductions which may give inexact
solutions~\cite{rahmani21equivalence,saberifar18improper}.
(The state-of-the-art practical method for exact combinatorial filter minimization, at the time of writing, appears 
in~\cite{zhang2021accelerating}.)
The very closely related work of \cite{saberifar17special} identifies some parameters for which
filter reduction is not \fpt (namely: output size, treewidth, maximum number of appearances of any observation); they also give an \fpt algorithm for a stage-by-stage reduction problem (not identical to the problem we study, see~\cite[Lemma~10, pg.~95]{zhang20cover})
which has parameters distinct from those we examine. 

\subsection{Contributions}

The present paper gives a new fixed-parameter tractable (\fpt) algorithm
for the filter reduction problem. 
Such algorithms are so named because 
they involve the identification of specific parameters as dimensions that
characterize problem instances as inputs to the algorithm.  These algorithms
have attractive (polynomial) performance when the instances
are scaled up but with the parameters held constant.
As a formal tool, they provide more fine-grained treatment than merely
showing the problem is \nphard~\cite{niedermeier2006invitation}.
The primary significance of the algorithm we introduce is that it 
highlights specific structural aspects of instances that make 
minimization difficult. 
Put another way: when the parameters identified are
bounded, that which remains is a characterization of easy (i.e. polynomial
time) filter minimization instances.
For instance, easy filters to minimize are those where the number of 
zipper constraints (see Definition~\ref{def:zipper_constraint}) is small. 
Or, when the set of zipper constraints
might be large, minimization can be easy if
the vast majority satisfy the repairability property 
we describe (see Definition~\ref{def:nc}),
while being disconnected from those which do not. 
Also, 
when non-repairable constraints, owing to constraint inter-dependencies,
form long chains, or---even better---cycles, the problem is simplified. Each of these help reduce the (worst-case) cost of solution.

The perspective we emphasize, thus, is that the
algorithm provides a new complexity-theoretic insight by teasing apart specific structural factors affecting the hardness of filter reduction; 
though we do not 
put the
algorithm to
practical use, there is no basis to presume that it
is impractical, either. 


\section{Preliminaries}

\begin{definition}[filter~\cite{setlabelrss}]
A \defemp{deterministic filter}, or just \defemp{filter},
is a 6-tuple $\struct{F} = (V, v_0, Y, \tau, C, c)$, with $V$ a non-empty
finite set of states, $v_0$ an initial state, $Y$ the set of
observations, $\tau: V\times Y \hookrightarrow V$ 
the partial function describing transitions, $C$ the set of outputs, and $c: V\to
C$ being the output function.  
\end{definition}

One \emph{traces} a finite observation sequence $y_0 y_1 \dots y_k$ 
on a filter by starting at state $v_0$, and repeatedly following the edge labeled by 
$y_i$ to arrive at $v_{i+1} = \tau(v_i, y_i)$. The filter's \emph{output} is 
$c(v_k) \in C$ obtained from the last state visited.

\smallskip

\ourproblem{\textbf{Filter Minimization (\fm{\struct{F}})}}
{A deterministic filter $\struct{F}$.}
{A deterministic filter $\struct{F}^{\star}$ with fewest states,~such~that:
\begin{itemize}[topsep=2pt, leftmargin=9pt,itemindent=2pt,itemsep=2pt]
\item [\cri{1}]\hspace*{-2pt}any sequence which can be traced on $\struct{F}$ can also be traced on $\struct{F}^{\star}$;
\item [\cri{2}]\hspace*{-2pt}the outputs they produce on any of those sequences are identical.
\end{itemize}
\vspace*{-12pt}
\phantom{.}
}

Solving this problem requires some minimally-sized filter~$\struct{F}^{\star}$ that is functionally equivalent to $\struct{F}$, where the 
notion of equivalence\,---called \emph{output simulation}---\,needs only 
criteria~\cri{1} and~\cri{2} to be met. For a formal definition of output simulation, see~\cite[Definition~5, pg.~93]{zhang20cover}.

\begin{lemma}[\cite{okane17concise}]
\label{lem:FMhardness}
The problem \fmp is \nphard.
\end{lemma}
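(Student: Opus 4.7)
The plan is to give a polynomial-time many-one reduction from a well-known \nphard problem; the claim then follows immediately. Given the paper's own emphasis on the link between filter minimization and minimum clique cover, a natural target is \textsc{Graph $k$-Colorability} (equivalently, minimum clique cover on the edge-complement).

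\textbf{Construction.} Given a graph $G=(V,E)$ together with an integer $k$, I would build a filter $\struct{F}_G$ whose state set contains a distinguished state $v_u$ per vertex $u\in V$, plus a modest number of auxiliary states (for instance, a single initial state from which each $v_u$ is reached via a private observation, and per-edge ``witness'' target states carrying distinct outputs). The observation alphabet $Y$ contains a symbol $y_{uv}$ for each edge $\{u,v\}\in E$, and the partial transition $\tau$ together with the output function $c$ are wired so that, starting from $v_u$, tracing $y_{uv}$ yields output $c^{uv}_u$, whereas from $v_v$ the same symbol yields $c^{uv}_v\neq c^{uv}_u$; crucially, no such output clash is produced for any non-edge pair. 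Under the output-simulation equivalence of \cite{zhang20cover}, this makes states $v_u$ and $v_v$ mergeable precisely when $\{u,v\}\notin E$.

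\textbf{Correctness.} The central claim is that the minimum number of states in a filter output-simulating $\struct{F}_G$ equals $\chi(G)$ plus a constant accounting for the auxiliary states. The forward direction is routine: any proper $k$-coloring partitions $V$ into independent sets, each corresponding to a set of pairwise-compatible states, and collapsing along that partition yields a filter of size $k+O(1)$. The converse direction uses the edge-witness observations to argue that any partition realized by an output-simulating filter must separate the endpoints of every edge, hence induces a proper coloring of $G$. The construction is of size $O(|V|+|E|)$, so the reduction is polynomial.

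\textbf{Main obstacle.} The delicate point is engineering $\tau$ and $c$ so that the pairwise compatibility relation on the vertex-states $v_u$ tracks non-adjacency \emph{exactly}. Each edge constraint must be enforceable through an observation sequence that is traceable from \emph{both} endpoints\,---\,otherwise criterion~\cri{1} renders the constraint vacuous\,---\,while, simultaneously, the partial transitions and auxiliary machinery must neither create spurious incompatibilities between non-adjacent pairs nor destroy compatibilities that a valid coloring relies on. Reconciling criteria~\cri{1} and~\cri{2} in the presence of a partial $\tau$, and arguing that long observation sequences reached via auxiliary states cannot undermine the vertex-level analysis, is where the construction demands real care.
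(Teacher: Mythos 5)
Your reduction target matches what the cited source \cite{okane17concise} actually does: O'Kane and Shell establish \nphard{}ness by encoding graph coloring into a filter whose minimum output-simulating size tracks the chromatic number, so in outline you have rediscovered their route. However, the sketch leaves two things open that the final argument must close, and the second one is where your ``main obstacle'' paragraph stops short.

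First, ``plus a constant'' is not automatic. If each edge is given a fresh pair of witness outputs, the witness states are $\Theta(|E|)$ pairwise-incompatible dead ends, so the auxiliary overhead is not a constant. Fix a global orientation of the edges and recycle just two witness outputs ($c^{uv}_u=0$, $c^{uv}_v=1$); then all $0$-witnesses collapse to one state in any minimal output-simulating filter, likewise the $1$-witnesses, and (together with the initial state, given its own unique output) the overhead truly is a constant $c$, so the decision question becomes ``is there an output-simulating filter with $\leq k+c$ states?''. Second, and more substantively, the lower bound does not follow from pairwise compatibility alone: by Lemma~\ref{lm:fm_eqv}, \fm{\struct{F}} is equivalent to \mzccp, not to plain minimum clique cover, so you must argue that the zipper constraints $\zipcoll{Z}_{\struct{F}_G}$ are vacuous. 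Fortunately this falls out of the very feature you worried about: because $y_{uv}$ is private to the edge $\{u,v\}$, two distinct vertex-states $v_u$, $v_{u'}$ share a nonempty extension only when $\{u,u'\}\in E$. Hence (i) non-adjacent vertex-states (given a common default output) are compatible vacuously, (ii) adjacent ones disagree after $y_{uv}$ and are incompatible, and (iii) no compatible pair shares an outgoing observation label, so no determinism-enforcing zipper constraint is ever generated. With $\zipcoll{Z}_{\struct{F}_G}=\emptyset$, the compatibility graph is the complement $\overline{G}$ on the vertex-states, disjoint from the two witness cliques and the isolated initial state, and its minimum clique cover is exactly $\chi(G)+c$; this is the step that converts your sketch into a proof. (One minor anachronism: the cited 1997-lineage argument predates the \mzccp framework and proves the lower bound directly on output-simulating filters; leaning on \cite{zhang20cover} is legitimate for establishing the lemma, but is not how the source you are citing argues it.)
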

\shortvspace{-0.75ex}

Recently, in giving a minimization algorithm\,\cite{zhang20cover}, 
\fmp  was shown to be
equivalent to vertex covering when 
the valid coverings satisfy a set of auxiliary constraints.
These constraints, denoted $\zipcoll{Z}$,  are termed \emph{zipper} constraints as they may cause long chains of vertices to be `pulled together' incrementally. 
First, we describe this vertex covering problem in and of itself. Next, this abstract problem will be connected back to filters through the notion of compatibility.

\ourproblemwithnote{\textbf{Minimum Zipped Clique Cover} (\mzcc{G}{\zipcoll{Z}})}
{A graph $G=(V,E)$ and 
a collection of ordered pairs of $G$'s edges
$\zipcoll{Z}=\{\zc{U_1}{V_1}{{y_1}},
\zc{U_2}{V_2}{y_2},
\dots,
\zc{U_m}{V_m}{y_m}\}$, where
\mbox{$U_i, V_i \in E$}.
}
{Minimum cardinality clique cover $\cover{K}$ such that:
\begin{itemize}[topsep=2pt,leftmargin=9pt,itemindent=2pt,parsep=6pt,itemsep=0pt]
\item [\cri{1}]\hspace*{-2pt}$\bigcup_{K_i \in \cover{K}} K_i = V$\!, 
 with each $K_i$ forming a clique on~$G$;
\item [\cri{2}]\hspace*{-2pt}$\forall K_i \in \cover{K}$, if there is some $\ell$ such that $U_\ell \subseteq K_i$, then some \mbox{$K_j \in \cover{K}$} must have $K_j \supseteq V_\ell$.
\end{itemize}
\vspace*{-8pt}
\phantom{.}
}
{
\hspace*{-2.5pt}(This is a special case of \mzccp in \cite{zhang20cover} but will suffice, see discussion in footnote~\ref{fn:svo}.)
}
\addtocounter{footnote}{1}
\footnotetext{After examining filters like those here, the later sections of that paper go further by studying a generalization in which function $c$ may be a relation. Complications arising from that generalization will not be discussed herein.\label{fn:svo}}



In bridging filters and covers, the key idea
is that certain sets of 
states in a filter can be identified as candidates to merge together, and such
`mergability' can be expressed as a graph. The process of forming covers of this graph identifies states to consolidate and, accordingly, minimal covers yield 
small filters. The first technical detail concerns this graph and states that are candidates to be merged:

\begin{definition}[extensions\gobble{ and }{/}compatibility]
\label{def:compat}
For a state $v$ of filter $\struct{F}$, \gobble{we will }use $\extension{\struct{F}}{v}$ to
denote the set of observation sequences, or \defemp{extensions}, that can be traced starting from~$v$.  
\gobblexor{Two states}{States} $v$ and $w$
are \defemp{compatible} \gobble{with each other }if their outputs agree on 
$\extension{\struct{F}}{v} \cap \extension{\struct{F}}{w}$, their
common extensions. In such cases, we \gobble{will }write $v\compatible w$.
The \defemp{compatibility graph} $G_\struct{F}$ possesses edges between states
if and only if
 they are compatible.
\end{definition}

But simply building a minimal cover on $G_\struct{F}$ is not enough because covers 
may merge some elements which, when
transformed into a filter,
produce nondeterminism.
The core obstruction is when 
a fork is created, as when
two
compatible states are merged, both
of which have outgoing edges bearing identical labels, but whose destinations
differ. 
To enforce determinism, we
use constraints to
forbid forking and require mergers
to flow downwards. 
\gobblexor{The following specifies such a constraint:}{See the following:}

\begin{definition}[determinism-enforcing zipper constraints]
\label{def:zipper_constraint}
Given a pair of compatible states $U=\{u_1, u_2\}$ in $\struct{F}$ and their $y$-children, $V=\{\tau(u_1, y), \tau(u_2, y)\}$,
then the  ordered pair $\zc{U}{V}{y}$ is a \emph{determinism-enforcing zipper constraint} of~$\struct{F}$. 
\end{definition}
A zipper constraint $\zc{U}{V}{y}$ is satisfied by a clique cover if $U$ is not covered in a clique, or both $U$ and $V$ are covered in cliques. (This is criterion  \cri{2} for \mzccp.)
For filters, in other words, if the states in $U$ are to be merged (or consolidated) then
the downstream states, in $V$, must be as well.
The collection of all  determinism-enforcing 
zipper constraints for a filter 
$\struct{F}$ is denoted $\zipcoll{Z}_\struct{F}$.


\begin{figure}[t!]
\centering
\includegraphics[width=0.75\linewidth]{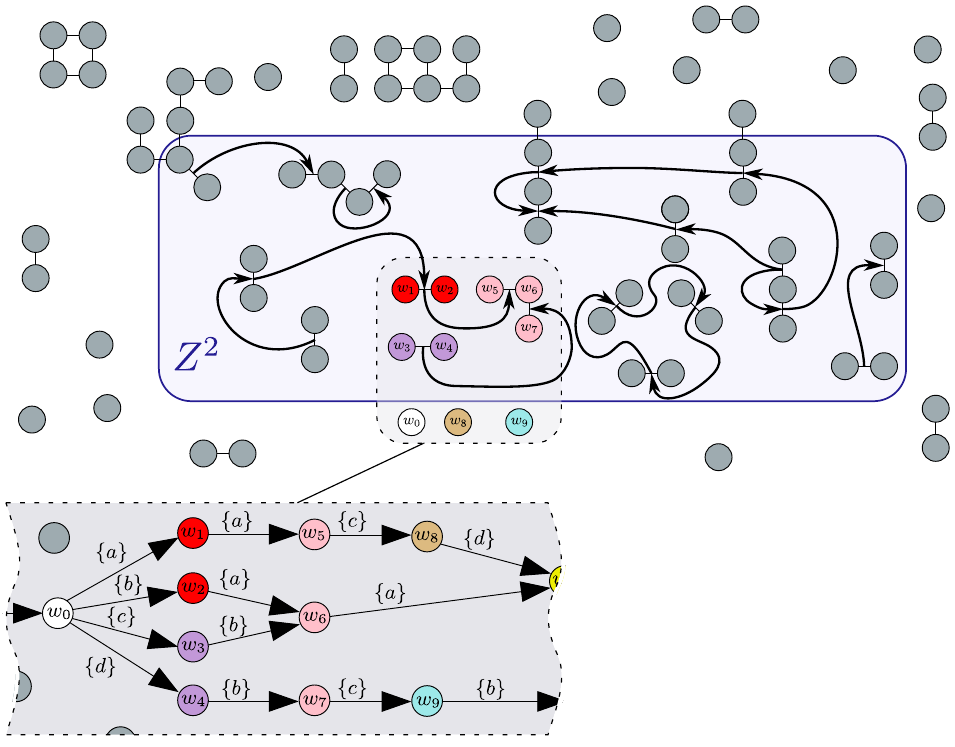}
\hspace*{7cm}
\begin{minipage}{0.35\textwidth}
\vspace*{-2.2cm}
\caption{\label{fig:filter_to_graph}An illustration showing 
(partially) a filter $\struct{F}$ (left inset)
leading to a compatibility graph 
$G_\struct{F}$ (above).
The $\Zpairs$ set is also shown.}
\end{minipage}
\vspace*{-1.0cm}
\end{figure}

In summary: 
for filter $\struct{F}$, the collection of zipper constraints 
$\zipcoll{Z}_\struct{F}$ 
that ensures the desired result will be a deterministic filter 
can be constructed directly as follows.
For $u\compatible v$, a pair of compatible states in~$\struct{F}$,
use $\tau(\cdot,y)$ to trace forward under observation~$y$; if the 
\mbox{$y$-children} thus
obtained are distinct, we form zipper constraint
$(\{u,v\},\{\tau(u, y), \tau(v, y)\})$ to ensure that if $u$ and $v$ are
merged (by occupying some set in the cover together), their \mbox{$y$-children} will be as well.
Construct~$\zipcoll{Z}_\struct{F}$ by collecting all such constraints for all compatible pairs, using every observation.
A cartoon illustrating the result of this procedure is shown 
in Figure~\ref{fig:filter_to_graph}: a snippet of the filter $\struct{F}$ appears at bottom left; its undirected compatibility graph, $G_\struct{F}$, appears above; zipper constraints are shown as the directed edges on the undirected compatibility edges.
Both $G_\struct{F}$ and $\zipcoll{Z}_\struct{F}$ are clearly polynomial in the size of~$\struct{F}$.
Then, a minimizer of $\struct{F}$ can be obtained from the solution to the minimum
zipped vertex cover problem, \mzcc{G_\struct{F}}{\zipcoll{Z}_\struct{F}}: 
\begin{lemma}[\cite{zhang20cover}]
Any \fm{\struct{F}}\!\! can be converted to an \mzcc{G_\struct{F}}{\zipcoll{Z}_\struct{F}} in polynomial time; hence \mzccp is \nphard.
\label{lm:fm_eqv}
\end{lemma}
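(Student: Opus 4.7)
The plan is to verify that the construction of $(G_\struct{F}, \zipcoll{Z}_\struct{F})$ from the input filter $\struct{F}$ runs in polynomial time and then to establish a cardinality-preserving correspondence between feasible solutions of \mzcc{G_\struct{F}}{\zipcoll{Z}_\struct{F}} and deterministic filters that output-simulate $\struct{F}$. The \nphard-ness of \mzccp then follows directly by composing this reduction with Lemma~\ref{lem:FMhardness}.

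For the polynomial-time portion, the compatibility relation $\compatible$ is computable by a standard greatest-fixed-point iteration: initialize with all pairs $\{u,v\}$ whose outputs already agree, $c(u) = c(v)$, and repeatedly prune any surviving pair that has some observation $y$ under which both $\tau(u,y), \tau(v,y)$ are defined and that $y$-child pair has itself been pruned. This converges in $O(|V|^2)$ iterations and yields the edge set of $G_\struct{F}$. Enumerating each compatible pair against each observation with both transitions defined then produces $\zipcoll{Z}_\struct{F}$, whose size is bounded by $O(|V|^2 |Y|)$.

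For the correspondence, I would establish both directions. From a feasible cover $\cover{K}$ satisfying every zipper constraint, construct a filter $\struct{F}_\cover{K}$ whose states are the cliques of $\cover{K}$: the output function is well-defined because pairwise compatibility applied to the empty extension forces $c$ to be constant on each clique; the initial state is any clique containing $v_0$; the transition under $y$ from clique $K$ goes to any clique of $\cover{K}$ that covers the set of $y$-children of $K$. Criteria \cri{1} and \cri{2} of \fmp then follow by induction on trace length. Conversely, any deterministic $\struct{F}^\star$ that output-simulates $\struct{F}$ induces a cover via the fibers of the simulation relation: each fiber is a clique in $G_\struct{F}$ (by pairwise output-agreement on common extensions), and the zipper constraints are satisfied because determinism of $\struct{F}^\star$ forces the $y$-children of a fiber into a single fiber. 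Cardinality is preserved in both directions, so the two minima coincide.

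The main obstacle lies in the forward construction, specifically in arguing that the transition function $\tau^\star$ is well-defined. Pairwise zipper constraints immediately furnish, for each pair $\{u,v\} \subseteq K$ with both $y$-transitions defined, some cover clique containing $\{\tau(u, y), \tau(v, y)\}$; but determinism demands a \emph{single} cover clique containing \emph{all} $y$-children of $K$ simultaneously. The key observation is that the $y$-children of $K$ inherit pairwise compatibility from $K$ and therefore themselves form a clique in $G_\struct{F}$; one then argues, by combining the pairwise witnesses from the zipper constraints with a closure-under-intersection argument on the cover, that a single witnessing clique in $\cover{K}$ must exist (or, equivalently, that any feasible cover can be taken to satisfy this simultaneous-coverage property without increase in cardinality). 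Once this technical point is settled, the cardinality-preserving reduction is complete and the \nphard-ness of \mzccp is immediate from Lemma~\ref{lem:FMhardness}.
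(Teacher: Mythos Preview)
The paper does not actually prove this lemma; it cites \cite{zhang20cover} and remarks that the cited work also supplies the efficient construction of a deterministic filter from a minimum zipped clique cover. So there is no in-paper proof to compare against, but your proposal can still be assessed on its own merits. Your overall framework is the natural one, and the backward direction (filter $\to$ cover via the fibres of the simulation map) is fine. You also correctly isolate the crux of the forward direction: pairwise zipper constraints guarantee only that for each $\{u,v\}\subseteq K_i$ some clique of $\cover{K}$ covers $\{\tau(u,y),\tau(v,y)\}$, whereas a deterministic transition out of $K_i$ requires a \emph{single} element of $\cover{K}$ containing the entire $y$-child set of $K_i$.

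The gap is in how you resolve that crux. A ``closure-under-intersection'' argument does not go through: intersections of the pairwise witnessing cliques need not contain any of the $y$-children (let alone all of them), and clique covers enjoy no Helly-type property here. Your parenthetical alternative---that any feasible cover can be modified, without increase in cardinality, so that each clique's $y$-children lie in a single cover element---is precisely the substantive lemma that must be proved, and it is not obvious: one can exhibit zipper-satisfying covers in which the $y$-children of some clique are scattered across several cover elements with no single element containing them all. Bridging this requires either an explicit cover-repair procedure together with a cardinality-preservation argument, or an alternative filter construction (for instance, one that first passes to a suitable partition); the cited paper~\cite{zhang20cover} supplies such an argument, and your sketch stops short of it.
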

Though we skip the details, the proof in~\cite{zhang20cover} of the preceding lemma also gives an efficient way to construct a deterministic
filter from the minimum cardinality clique cover.

As a final point on the connection of these two problems,  
combinatorial filters generate `most' graphs. Specifically, 
\cite{zhang23general} proves that  some filter $\struct{F}$ realizes $G$ as its constraint graph $G_{\struct{F}}$
if and only if either: (1) the graph $G$ has at least two connected components, or (2) $G$ is a complete graph.

\bigskip


In any graph $G=(V,E)$, 
we refer to the neighbors of a vertex $v\in V$
by set {$\neighbor{G}{v} \coloneqq \{ w\in V\;|\;(w,v) \in E\} \cup \{v\}$}. 
Note that we explicitly include $v$ 
in its own neighborhood.

\begin{definition}[\cn\footnotemark]
\label{def:nc}
A pair of vertices $\pr{v}{w}$ in some
graph $G=(V,E)$ have \defemp{\cn} 
if and only if we have either
\mbox{$\neighbor{G}{v}\subseteq \neighbor{G}{w}$} 
or  $\neighbor{G}{v}\supseteq \neighbor{G}{w}$.
\end{definition}

\footnotetext{This generalizes a concept first introduced in \cite[Definition~25]{zhang23general}.}

We will use the following recent result of Ullah:

\begin{lemma}[From \cite{ullah22structural}]
\label{existing:fpt}
Given a graph $G$ with $n$ vertices, let $\beta$ be the size of the largest clique in $G$, 
and let the number of cliques in the minimum clique cover be $m$, then there is 
an algorithm that computes the minimum clique cover in
$2^{\beta m \log m}\, n^{O(1)}$.
\end{lemma}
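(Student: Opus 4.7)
The plan is to obtain the bound by direct brute-force enumeration of vertex-to-clique assignments, with the crux being a structural bound that makes that enumeration tractable in the stated parameters. The first observation is that $n \leq \beta m$: a minimum clique cover $\{K_1,\ldots,K_m\}$ covers all $n$ vertices with $m$ cliques each of size at most $\beta$, so $n \leq \sum_i |K_i| \leq \beta m$. This single inequality is what ties $n$ to the product $\beta m$ and drives the entire analysis.

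Next, I would reduce the search for a clique cover to a search for a clique \emph{partition} of the same size. Given any cover $\{K_1,\ldots,K_m\}$, assign each vertex to one arbitrarily chosen clique containing it, and let $K'_i$ be the set of vertices assigned to $K_i$. Since $K'_i \subseteq K_i$ and subsets of cliques remain cliques, $\{K'_1,\ldots,K'_m\}$ is a partition of $V$ of the same cardinality that is also a valid clique cover. So minimizing over partitions gives the same optimum as minimizing over covers.

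The algorithm then iterates $m' = 1, 2, 3, \ldots$, and for each $m'$ enumerates all $(m')^n$ functions $f \colon V \to \{1,\ldots,m'\}$, checking in $O(n^2)$ time whether each preimage $f^{-1}(j)$ induces a clique in $G$; it returns the smallest $m'$ for which a valid function exists. The loop halts at $m' = m$, so the total work is $\sum_{m'=1}^{m}(m')^n \cdot n^{O(1)} \leq m \cdot m^n \cdot n^{O(1)}$. Substituting $n \leq \beta m$ yields $m^{\beta m + 1} \cdot n^{O(1)} = 2^{(\beta m + 1)\log m} \cdot n^{O(1)} = 2^{\beta m \log m} \cdot n^{O(1)}$, where the extra factor of $m$ is absorbed into the polynomial (since $m \leq n$).

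The main obstacle is justifying the cover-to-partition reduction without losing optimality, which is handled by the subset-of-a-clique-is-a-clique observation above; without it, enumerating functions (rather than set families) would miss covers in which vertices appear in multiple cliques. A pleasant feature of this approach is that the algorithm itself is oblivious to $\beta$ and $m$: these parameters appear only in the analysis, so no explicit computation of the clique number or the clique cover number is required in advance.
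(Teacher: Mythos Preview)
Your argument is correct. The paper's own proof of this lemma is not really a proof at all but a citation: it invokes Theorem~1.8 of Ullah's paper and remarks that, in Ullah's notation, the vertex clique cover (VCC) problem is solved via his LRCC framework with the parameters set to $G$, $k=m$, and $E^*=\emptyset$. Your route is genuinely different and considerably more elementary: instead of appealing to that machinery, you give a self-contained brute-force argument whose crux is the structural inequality $n \leq \beta m$. This buys a proof that needs no external algorithmic framework and makes transparent exactly why $\beta$ and $m$ govern the complexity---the search space of vertex-to-label assignments has size at most $m^n \leq m^{\beta m}$. The citation-based route, by contrast, situates the result inside a more general tool (LRCC with an edge-constraint set $E^*$), which may matter if one later wishes to impose additional edge constraints, but it obscures the simple counting that underlies this particular bound.
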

\ifmoveprooftoend
\begin{proof}
This follows from Theorem~1.8 of  \cite[pg.~4]{ullah22structural}, and the algorithms therein:
in his notation, we solve a VCC problem, which is possible
via the LRCC problem with its three parameters
set to $G$, $k = m$, $E^*=\emptyset$. 
\end{proof}
\else
\fi

\vspace*{-6pt}
\section{Zippers and prescriptions}
\vspace*{-4pt}

For a zipper constraint collection 
$\zipcoll{Z}=\{\zc{U_1}{V_1}{y_1}, \zc{U_2}{V_2}{y_2}, \dots, \zc{U_m}{V_m}{y_m}\}$, let
$\Zpairs = \{U_1, U_2, \dots, U_m, V_1, V_2, \dots, V_m\}$, i.e., 
the unordered pairs of vertices (or edges)
appearing within  collection $\zipcoll{Z}$.
We will write $P \up Q$ if and only if there exists a zipper constraint $\zc{P}{Q}{y} \in \zipcoll{Z}$. 
We define  $\ZpairsS \subseteq \Zpairs$ to be the 
set 
$\ZpairsS \coloneqq \{P \subseteq V(\struct{F})\;|\; \exists Q \in \Zpairs,  P \up Q\}$, i.e., the 
unordered pairs 
appearing as  sources in the zipper constraint collection $\zipcoll{Z}$.
Similarly, let $\ZpairsT \subseteq \Zpairs$ be those pairs  appearing as potential targets for enforced merging within the zipper constraints, i.e.,
$\ZpairsT \coloneqq \{Q \subseteq V(\struct{F})\;|\; \exists P \in \Zpairs,  P \up Q\}$.
By construction 
$\Zpairs = \ZpairsS \cup \ZpairsT$, and, in general, $\ZpairsS \cap \ZpairsT \neq \emptyset$. 

\begin{definition}
Given $\zipcoll{Z}$, the pair $P_b \in \Zpairs$ is \defemp{downstream} from pair 
$P_a \in \Zpairs$, written $P_a \strictupstream P_b$,
if $P_a \up P_b$, or  if $P_a \up P_c$
for some $P_c \in \Zpairs$ with  $P_c \strictupstream P_b$. 
\end{definition}

\subsection{Prescriptions}

To tackle the \mzccp problem, we search for covers subject 
to a rule stating that some specific pairs must be merged, while others must never be.
The idea is to make and fix choices for a subset of the pairs involved 
in the zipper constraint collection
 so that
this prescription respects the zipper constraints for elements in the collection. We will denote the collection via
set $\D$, defined next.

\begin{definition}
With zipper constraint collection $\zipcoll{Z}$
given some set of pairs \mbox{$\D \subseteq \Zpairs$}, 
a \defemp{prescription on $\D$} is a subset of pairs
$\Selon{S}{\D} \subseteq \D$.
Prescription $\Selon{S}{\D}$ on $\D$ is termed \defemp{downstream enabled} if and only if
$(P_a \in \Selon{S}{\D}) \wedge (P_a \strictupstream P_b) \implies 
(P_b \in \Selon{S}{\D}) \vee (P_b \not\in \D)$.
\end{definition}

The elements in $\Selon{S}{\D}$ are called
the \onpairs; those in  
$\D \setminus \Selon{S}{\D}$ are the \offpairs, which we write as $\Seloff{S}{\D}$. 
A prescription is silent about elements outside $\D$ (the mnemonic for \D being `domain' --- elements within the domain are prescribed as either being \on or \off; elements outside the domain have no prescription).
See Figure~\ref{fig:prescription}.
(Note that since $\D \subseteq \Zpairs$, both the 
\onpairs and \offpairs are from the set of zipper constraints, hence are edges within the graph in the 
\mzccp problem.)
If $\Selon{S}{\D} = \{P_1, P_2, \dots, P_m\}$
is a prescription, then it will be used to
require that the \onpairs be merged, while the \offpairs are prohibited from being merged. 
The idea is to ensure that a cover is produced that
respects the  prescription:

\begin{figure}[b!]
\centering
\begin{minipage}{0.6\textwidth}
\includegraphics[width=\linewidth]{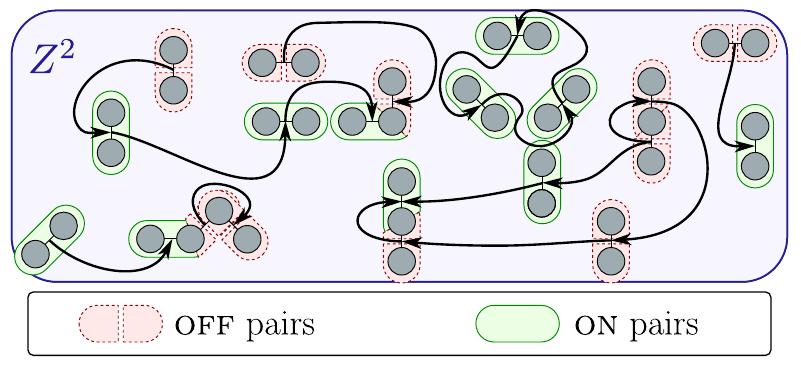}
\end{minipage}
\begin{minipage}{0.38\textwidth}
\caption{An example downstream-enabled prescription for 
the pairs in $\D = \Zpairs$ corresponding to the
instance in Figure~\ref{fig:filter_to_graph}.
Dashed red and solid green outlines represent states in $\struct{F}$ to be
split and merged, respectively.
\label{fig:prescription}
}
\end{minipage}
\end{figure}

\begin{definition}[Faithfulness]
\label{def:faithfull}
Let graph $G$ and 
the  collection of 
zipper constraints 
$\zipcoll{Z}$ be given.
For some domain $\D \subseteq \Zpairs$, 
a cover $\cover{K} = \{K_1, K_2,\allowbreak\dots, K_m\}$ 
of $G$
is \emph{faithful to}
prescription 
$\Selon{S}{\D} = \{P_1, P_2, \dots, P_n\}$
if and only if:
\begin{enumerate}
\item \label{def:faithful-on} For every $P_i \in \Selon{S}{\D}$ there exists 
some clique $K_\ell \in \cover{K}$
where $P_i \subseteq K_\ell$;
\item \label{def:faithful-off} For every $P_j \in \Seloff{S}{\D}$ 
there is no 
clique $K_\ell \in \cover{K}$ such that 
$P_j \subseteq K_\ell$.
\end{enumerate}
\end{definition}

We will achieve this via modification of a compatibility
graph that will make \on and \off sets 
enforce and prohibit mergers.
The modification of the graph is given through a series of set constructions next,
before showing (in Lemma~\ref{lem:faith}) that this can be used as desired.

\subsection{Enumerating downstream enabled prescriptions}

A pair graph is a directed graph whose vertices are pairs from $\D$. We will write $\prG{G}$, where vertices 
$\prG{V}(\prG{G}) \subseteq \D$ and edges
$\prG{E}(\prG{G}) \subseteq \prG{V}(\prG{G}) \times \prG{V}(\prG{G})$.
Let $\prG{G} - [R]$ denote the
subgraph \prG{G'} with vertices $R$ removed, i.e., vertices
$\prG{V}(\prG{G'})=\prG{V}(\prG{G}) \setminus R$, and 
also edges
$\prG{E}(\prG{G'}) = \left(\prG{V}(\prG{G'}) \times  \prG{V}(\prG{G'}) \right) \cap \prG{E}(\prG{G})$.
Next, we define the collection of up and downstream pairs within a given pair graph:
$\upSet{\prG{G}}{P}\!=\! \{ P_u\in\prG{V}(\prG{G})\!\mid\! P_u \strictupstream  P \}$;
$\downSet{\prG{G}}{P}\!=\! \{P_d\in\prG{V}(\prG{G})\!\mid\! P \strictupstream  P_d \}.$
\vspace*{1.1ex}

\begin{algorithm}
{
\caption{\enum(\prG{G})\label{algo:enum}}
\renewcommand{\algorithmicrequire}{\textbf{Input:}}
\renewcommand{\algorithmicensure}{\textbf{Output:}}
\begin{algorithmic}[1]
\REQUIRE{Pair graph \prG{G}}
\ENSURE{All prescriptions that are downstream enabled}\\[6pt]

\IF {$\prG{V}(\prG{G}) = \emptyset$}
\RETURN $\;\emptyset$
\gobblexor{\ELSE}{\ENDIF}
\STATE Let $P \in \prG{V}(\prG{G})$ be an arbitrary pair where \mbox{$\forall Q \in  \prG{V}(\prG{G})$}, $|\downSet{\prG{G}}{P}| \geq |\downSet{\prG{G}}{Q}|$ 
\RETURN $\;\;\enum\big(\prG{G} - [\upSet{\prG{G}}{P}]\big) \bigcup\Big\{\!\{P\}\cup \downSet{\prG{G}}{P} \cup S\,\big|\,S \in \enum\big(\prG{G} - [\downSet{\prG{G}}{P}]\big)\!\Big\}$\\[-1pt] \label{line:non-triv}
\gobblexor{\ENDIF}{}
\end{algorithmic}
}
\end{algorithm}
\vspace*{-1ex}

To generate all downstream enabled prescriptions, we 
invoke \enum(\prG{G}(\D)) in Algorithm~\ref{algo:enum},
 where $\prG{G}(\D)$ is the pair graph having  vertices $\D$, and an edge from pair $P_s$ to $P_d$
if and only if $P_s \up P_d$.
In the return statement in line~\ref{line:non-triv}, the first set corresponds to prescriptions 
where $P$ is being turned \off, while the second set has $P$ being  turned \on.
Note that because cycles
are possible between zipper constraints,  $\upSet{\prG{G}}{P} \cap 
\downSet{\prG{G}}{P} \neq \emptyset$, in general. 
The presence of cycles reduces the number of
prescriptions to enumerate.
When $P$ appears within a cycle, 
if $P$ is \off, then all pairs in the cycle have to be \off too;
if $P$ in \on, then they are all also \on as well. 

To dispatch with cycles,
define a relation on pairs in $\D$: let $P \cyceq Q$ if and only if 
$P = Q$ or $P \strictupstream Q \strictupstream P$. Further, $\cyceq$ is an equivalence 
relation  and
the set of equivalence classes $\D /\!\!\!\cyceq$ can be partially ordered by lifting the  $\strictupstream$ relation.
Then the total number of  downstream enabled prescriptions, $|\enum(\prG{G})|$, is bounded by 
$((\ell + 1)+1)^{\omega} = 2^{\omega \lg(2 + \ell)}$
with
$\ell$ being the height (i.e., the length of the longest chain) and $\omega$ width (i.e., size of the largest anti-chain) 
of $(\D /\!\!\cyceq, \strictupstream)$ respectively.
See Figure~\ref{fig:poset} above.

\begin{figure}[t]
\centering
\includegraphics[width=0.70\linewidth]{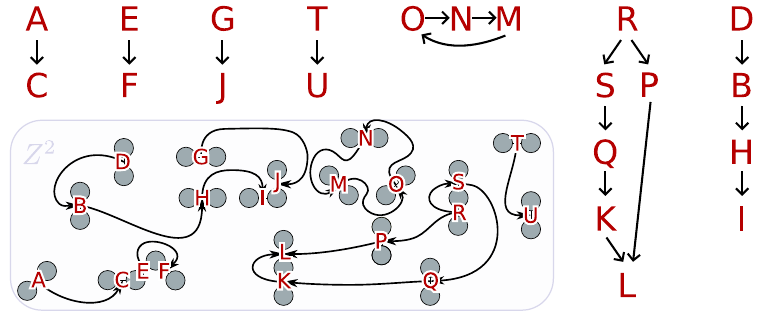}
\vspace*{-1ex}
\caption{Partial order of $\strictupstream$.
Here,
height 
$\ell = 4$ 
and width 
$\omega=8$.\label{fig:poset}}
\vspace*{-2ex}
\end{figure}

\smallskip

As the enumeration is of 
 downstream-enabled prescriptions from $\Zpairs$,
it is worth noting, firstly, that 
 $\Zpairs$ will often be much smaller than the size of the 
input filter ($|\Zpairs| \leq |V(\struct{F})|$). 
Second, $\D$ may be a proper subset of $\Zpairs$ (see Section~\ref{sec:repairable}, where $|\D| \leq |\Zpairs|$).
Third, the number of downstream-enabled prescriptions 
for $\D$ will often be much smaller than $2^{|\D|}$,
i.e., $\omega \ll |\D|$,
owing to both the reduction obtained from cycles 
($|\D/\!\!\cyceq\!| \leq |\D|$),
and ordering structure inherited from 
the sequential zipping that arises from
the filter (e.g., as $\ell$ increases, 
worst-case  $\omega$ decreases).

\section{Graph Augmentation}
Downstream-enabled prescriptions are effective at encoding
choices that are imperative to satisfaction of the zipper constraints, without specifying a full clique cover. 
Given such a prescription, first we augment the compatibility graph by incorporating the prescription. Thereafter, as the zipper constraints are no longer a concern, we can focus on the remaining minimum clique cover problem on the augmented graph. Finally we transport the solution from the augmented graph back to the original compatibility graph.

\subsection{Augmenting the constraint graph}


\begin{construction}[Augmented Graph $G^+$]
\label{const:augmentation}
If $G = (V,E)$, then construct $G^+(\Selon{S}{\D}) = \left(V^+(\Selon{S}{\D}), E^+(\Selon{S}{\D})\right)$ with 
$V^+(\Selon{S}{\D}) \coloneqq V_{\prior} \cup V_{\extra}(\Selon{S}{\D})$ where
\begin{itemize}
\item[] $V_{\prior} \coloneqq 
    \big\{  \newvertexS{u}\;\big|\;
    u \in V \big\}$,
 
\item[] $V_{\extra}(\Selon{S}{\D}) \coloneqq 
    \big\{  \newvertex{u}{w}\;\big|\;
    \pr{u}{w} \in \Selon{S}{\D} \big\}$,

\item[] $E^+(\Selon{S}{\D})  \coloneqq
\Big\{  \pr{\newvertexSet{A}}{\newvertexSet{B}}
\in V^+(\Selon{S}{\D})\times V^+(\Selon{S}{\D})
\;\Big|\;
A \cup B\text{ form a clique in}\phantom{xxx} \allowbreak 
\phantom{.}\hfill\text{graph }G'=(V, E \setminus \Seloff{S}{\D})
\Big\}$.
\end{itemize}
\end{construction}


The vertices in $V_\prior$ are simply re-named copies of those in $V$. 
The set $V_{\extra}$ introduces new vertices for those pairs in $\D$ which have been turned 
\on.
The definition of the edge set adds edges to ensure that the new vertices will be seen as mutually compatible when there is no obstruction to compatibility from within the original graph.
\bigskip 

\begin{property} 
\label{prop:vertex-sets-quantify-all}
Let vertices $\newvertexSet{A},\newvertexSet{B} \in V^+(\Selon{S}{\D})$,  $\newvertexSet{A} \neq \newvertexSet{B}$
then
\begin{equation*}
\{\newvertexSet{A},\newvertexSet{B}\}
 \in  E^+(\Selon{S}{\D}) \iff 
 \forall
u \in A, \forall w \in B, u \neq w, \{\newvertexS{u}, \newvertexS{w}\} \in  E^+(\Selon{S}{\D}).
\end{equation*}



\end{property} 
\ifmoveprooftoend
\begin{proof}
When $A$ and $B$ are singletons,
i.e., $\newvertexSet{A},\newvertexSet{B}\in V\prior$, 
the two sides of the if and only if are identical statements.
\begin{itemize}

\item [$\impliedby$] 
The given antecedents and fact that $\newvertexSet{A},\newvertexSet{B} \in V^+(\Selon{S}{\D})$ are exactly the conditions in the construction of 
$E^+(\Selon{S}{\D})$, hence $\{\newvertexSet{A},\newvertexSet{B}\}\in E^+(\Selon{S}{\D})$.

\item [$\implies$] 
Suppose
$\{\newvertexSet{A},\newvertexSet{B}\} \in  E^+(\Selon{S}{\D})$
but there is some $u \in A$ and $w \in B$ with
$\{\newvertexS{u},\newvertexS{w}\} \not\in  E^+(\Selon{S}{\D})$.
Since $u \neq w$, hence
$\{u,w\} \not\in E$ or
$\{u,w\} \in 
\Seloff{S}{\D}$;
both contradict the 
supposition that
$\{\newvertexSet{A},\newvertexSet{B}\} \in  E^+(\Selon{S}{\D})$. 
\end{itemize}
\end{proof}
\else
\fi

\subsection{Relating graphs and their augmentations}

Next, we consider two operations which connect 
vertices in the original graph with those in its 
augmented graph.

\begin{definition}[Distillation]
\label{def:distillation}
Suppose a graph  $G = (V,E)$ and its augmentation $G^+(\Selon{S}{\D})$ based on 
prescription $\Selon{S}{\D} \subseteq \D$ is given.
The set of vertices of the augmented graph, $S^+ = \{\newvertexSet{A_1},\newvertexSet{A_2},\dots, \newvertexSet{A_{n}}\} \subseteq V^+(\Selon{S}{\D})$, may be \defemp{distilled} to obtain a set of vertices in the original graph:
$S \coloneqq  A_1 \cup  A_2 \cup \dots \cup A_{n}.$
\end{definition}

In the preceding, when $S$ is obtained from $S^+$ in this way, we will also 
refer to it as $S^+$'s distillate.
Further, 
we will also talk of the distillate of
a collection of sets $\{S_1^+, S_2^+, \dots, S_n^+\}$,
as the collection obtained by applying Definition~\ref{def:distillation} to each $S_i^+$, each
yielding their respective $S_i$.
In the particular uses of this concept which follow we will be interested in distilling collections that are covers.
The next property shows that 
distillation preserves cliqueness,
while transporting a structure from graph $G^+$ back to $G$. 

\begin{property}
\label{prop:distilate-cliques}
For a graph  $G = (V,E)$ and its augmentation $G^+(\Selon{S}{\D})$ based on 
prescription $\Selon{S}{\D} \subseteq \D$, 
suppose
$S^+ \subseteq V^+(\Selon{S}{\D})$ produces 
$S \subseteq V$ when distilled.
Then $S^+$ is a clique in  $G^+(\Selon{S}{\D})$ 
if and only if $S$ is a clique in  $G' = (V, E \setminus (\Seloff{S}{\D}))$.
\end{property} 
\ifmoveprooftoend
\begin{proof}
$\impliedby:$  
Let $S^+ = \{\newvertexSet{A_1},\newvertexSet{A_2},\dots, \newvertexSet{A_{n}}\}$.
According to Construction~\ref{const:augmentation}, for every $\newvertexSet{A_i}$ and $\newvertexSet{A_j}$,  with $i \neq j$, 
we know that $A_i \cup A_j$ form a clique in $G'$. But $A_i \cup A_j \subseteq S$; since $S$ is a clique,
$A_i \cup A_j$ is a clique, and hence there is an edge in $E^+(\Selon{S}{\D})$ connecting 
$\newvertexSet{A_i}$ and $\newvertexSet{A_j}$.

$\implies:$ 
Suppose $S$ was not a clique, then there are distinct vertices $u,w \in S$ that have no connecting edge in $G'$.
But $S$ being the distillate of $S^+$ means that 
there is  some $\newvertexSet{A_u} \in S^+$ with $u \in A_u$, and
some $\newvertexSet{A_w} \in S^+$ with $w \in A_w$.  
If $A_u = A_w =\pr{u}{w}$,
then  $\newvertex{u}{w} \in V_{\extra}(\Selon{S}{\D})$,
but that implies $\pr{u}{w} \in \Selon{S}{\D} \subseteq E \setminus \Seloff{S}{\D}$,  a contradiction.
Otherwise, $A_u \neq A_v$ but this is also a contradiction for we know $S^+$ is a clique, so
there is an edge between
$\newvertexSet{A_u}$ and  $\newvertexSet{A_w}$, but
Construction~\ref{const:augmentation}, thus, requires
an edge between $u$ and $w$ in $G'$.
\end{proof}
\else
\fi

The concept in the following definition is a sort of counterpoint to 
that of Definition~\ref{def:distillation}.

\begin{definition}[Expansion]
Suppose a graph  $G = (V,E)$ and its augmentation $G^+(\Selon{S}{\D})$ based on 
prescription $\Selon{S}{\D} \subseteq \D$ is given.
A set of vertices $S \subseteq V$ can be 
\emph{expanded} to
give elements 
of $V^+(\Selon{S}{\D})$,
i.e., vertices in 
 $G^+(\Selon{S}{\D})$:
\begin{equation}
\label{eq:undistill}
\expand{S} \coloneqq \{ \newvertexSet{A} \in V^+(\Selon{S}{\D}) \mid A\subseteq S\}.
\end{equation}
\end{definition}

(Notice the subtlety that binding elements to within $V^+(\Selon{S}{\D})$ ensures $A$ will be singletons or pairs.)
Observe that if  $S^+=\expand{S}$, 
i.e. $S$ expands to $S^+$, 
then the distillation of $S^+$ is $S$\,---\,this is proved as the first part of the next property.

\begin{property} 
\label{prop:un-distilate-cliques}
Given a graph  $G = (V,E)$ and its augmentation $G^+(\Selon{S}{\D})$ based on 
prescription $\Selon{S}{\D} \subseteq \D$. 
If $\cover{K}=\{K_1, \dots,K_m\}$ is a clique cover of $G$ faithful to  $\Selon{S}{\D}$ then, 
collecting the expanded sets in $\cover{K}^+=\{\expand{K} \mid K\in\cover{K}\}$, the  collection $\cover{K}^+$ is a clique cover on $G^+(\Selon{S}{\D})$. 
\end{property} 
\ifmoveprooftoend
\begin{proof}
To start we establish that the distillate of $\expand{S}$ is $S$. 
Let $S'$ be the  distillate of $\expand{S}$;
we show equality via two subset statements.
First, $S \subseteq S'$ as 
if $x \in S$ then $\newvertexS{x} \in \expand{S}$, and hence is in $S'$.
For the reverse, $S' \subseteq S$:
if $y \in S'$ then either $\newvertexS{y} \in \expand{S}$,
or $\newvertex{y}{w} \in \expand{S}$ for some $w$; but then $\{y\} \subseteq S$ or $\{y,w\} \subseteq S$, respectively.

Hence, for any $K_i^+ \in \cover{K}^+$, where $K_i^+ = \expand{K_i}$, 
we know $K_i$ is the distillate of $K_i^+$.
Then,  via Property~\ref{prop:distilate-cliques}, each $K_i^+$ is a clique 
in $G^+(\Selon{S}{\D})$ because $K_i$ is a clique in $G'=(V, E \setminus (\Seloff{S}{\D}))$. (This latter is a consequence of faithfulness: owing to requirement~\ref{def:faithful-off} in Definition~\ref{def:faithfull}.)

Furthermore, the collection $\cover{K}^+$ cannot omit to cover any
$\newvertexSet{A} \in V^+(\Selon{S}{\D})$ because: 
($i$) each element of $\newvertexS{v} \in V_{\prior}$ corresponds to a vertex
$v \in V$, and $v$ is covered by $\cover{K}$;
($ii$) elements $\newvertex{u}{w} \in V_{\extra}(\Selon{S}{\D})$
come from  $\pr{u}{w} \in \Selon{S}{\D}$, and 
faithfulness of $\cover{K}$ means, via
requirement~\ref{def:faithful-on} in Definition~\ref{def:faithfull},
that some $K_i \in \cover{K}$ will cover
$\pr{u}{w}$, and hence $\expand{K_i}$ covers $\newvertex{u}{w}$.
\end{proof}
\else
\fi

\section{Connecting covers, prescriptions, and constraints}
\label{section:covers}

We now have the machinery in place to present a
 useful lemma. This will show that the
augmented graph,  recalling Definition~\ref{def:faithfull} for faithfulness, will
yield covers that adhere to the prescription.

\begin{lemma}[Faithful constraint satisfaction]
\label{lem:faith}
Given a graph $G$ and an associated $\Zpairs$ from zipper constraint collection $\zipcoll{Z}$, let $\D \subseteq \Zpairs$. Then,
suppose we have 
some downstream-enabled prescription
$\Selon{S}{\D} = \{P_1, P_2, \dots, P_n\}$.
If  $\cover{K}^+$ is 
a cover of $G^{+}(\Selon{S}{\D})$, then 
there is a cover $\cover{K}$ of $G$,
the distillation of $\cover{K}^+$,
with $|\cover{K}| \leq |\cover{K}^{+}|$,
such that:
\begin{itemize}
\item Cover $\cover{K}$ is faithful to 
$\Selon{S}{\D}$.
\item Also, $\cover{K}$ is a cover of $G$ which satisfies all those zipper constraints strictly
interior to $\D$, namely those
$\big\{\zc{U_1}{W_1}{y_1},\allowbreak\zc{U_2}{W_2}{y_2},\dots,\zc{U_t}{W_t}{y_t}\big\} \subseteq  \zipcoll{Z}$ where $U_1, W_1, U_2, W_2,\dots, U_t, W_t \in \D$.

\end{itemize}
\end{lemma}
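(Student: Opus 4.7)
The plan is to construct $\cover{K}$ by \emph{distilling} every clique in $\cover{K}^+$: set $\cover{K} \defeq \{\,K \mid K \text{ is the distillate of some } K^+ \in \cover{K}^+\}$. Immediately $|\cover{K}| \leq |\cover{K}^+|$. Property~\ref{prop:distilate-cliques} then yields that each such $K$ is a clique in $G' = (V, E \setminus \Seloff{S}{\D})$ and, a fortiori, a clique in $G$. Coverage of $V$ follows because every $v \in V$ has a copy $\newvertexS{v} \in V_{\prior} \subseteq V^+(\Selon{S}{\D})$ which must be covered by some $K^+ \in \cover{K}^+$; by Definition~\ref{def:distillation} that $v$ lies in the distillate of $K^+$.

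The next step is to verify that $\cover{K}$ is faithful to $\Selon{S}{\D}$. For requirement~\ref{def:faithful-on} of Definition~\ref{def:faithfull}, any $\{u,w\} = P_i \in \Selon{S}{\D}$ contributes a vertex $\newvertex{u}{w} \in V_{\extra}(\Selon{S}{\D})$ that is covered by some $K^+ \in \cover{K}^+$, so both $u$ and $w$ appear in its distillate. For requirement~\ref{def:faithful-off}, any $P_j \in \Seloff{S}{\D}$ is an edge removed in forming $G'$, hence its endpoints are non-adjacent in $G'$; Property~\ref{prop:distilate-cliques} then precludes both endpoints from appearing in a common distillate, since each distillate is a clique in~$G'$.

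Finally, I would establish satisfaction of the zipper constraints interior to $\D$ via a short case analysis. Consider any $\zc{U_\ell}{W_\ell}{y_\ell} \in \zipcoll{Z}$ with $U_\ell, W_\ell \in \D$. If $U_\ell \in \Seloff{S}{\D}$, then by the faithfulness clause just established no clique of $\cover{K}$ contains $U_\ell$, so the constraint is satisfied vacuously by criterion~\cri{2} of \mzccp. If instead $U_\ell \in \Selon{S}{\D}$, then the very existence of the constraint gives $U_\ell \strictupstream W_\ell$, and since $\Selon{S}{\D}$ is downstream-enabled with $W_\ell \in \D$, we must have $W_\ell \in \Selon{S}{\D}$; requirement~\ref{def:faithful-on} then supplies a clique of $\cover{K}$ containing $W_\ell$, as required.

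The central technical lever throughout is Property~\ref{prop:distilate-cliques}, which simultaneously delivers cliqueness in $G$ and the exclusion of off-pairs; the rest is essentially bookkeeping across the \on/\off/outside-$\D$ trichotomy. The only subtlety worth flagging is that a zipper constraint whose source is \off is discharged vacuously---this is precisely the observation that justifies dropping the edges $\Seloff{S}{\D}$ when building $G^+$ in the first place, and it is why restricting attention to strictly interior constraints (rather than all of $\zipcoll{Z}$) is the natural scope for this lemma.
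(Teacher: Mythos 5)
Your proof is correct and follows essentially the same route as the paper: distill each clique of $\cover{K}^+$, invoke Property~\ref{prop:distilate-cliques} for cliqueness in $G'$ (hence $G$), check both faithfulness requirements, then split on whether the source pair of an interior constraint is \on or \off. The only cosmetic differences are that you discharge faithfulness requirement~\ref{def:faithful-off} directly from Property~\ref{prop:distilate-cliques} (distillates are cliques in $G'$, which omits the off-pair edges) where the paper argues by contradiction via Property~\ref{prop:vertex-sets-quantify-all}, and you handle the interior zipper constraints by direct case analysis rather than the paper's proof by contradiction; both variants are sound and yours is arguably a touch tighter.
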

\ifmoveprooftoend
\begin{proof}
Let $\cover{K}^+ = \{K^+_1, K^+_2, \dots, K^+_m\}$,
where each $K^+_i \subseteq   V^+(\Selon{S}{\D})$. 
Applying Definition~\ref{def:distillation} to distill each $K^+_i$ to yield a $K_i$, we obtain
$\cover{K} = \{K_1, K_2, \dots,\allowbreak K_m\}$.
Via Property~\ref{prop:distilate-cliques}, each $K_i$ is a clique, and
since every vertex in 
$V_{\prior} \subseteq V^+(\Selon{S}{\D})$ is covered by some $K^+_i$, 
$\cover{K}$ is a cover of $G$.
Clearly, $|\cover{K}| \leq |\cover{K}^+|$, and equality only fails to hold when distinct $K^+_r$ and $K^+_s$ give rise to  $K_r = K_s$.

First, we show that the faithfulness of $\cover{K}$ follows from 
Construction~\ref{const:augmentation} and
the process of distillation.
{Criterion 1:}
for any $\pr{v_1}{v_2} \in \Selon{S}{\D}$ there is a vertex 
$\newvertex{v_1}{v_2}$ in  $V_\extra(\Selon{S}{\D})$ and
this vertex must be covered by some $K^+_j \in \cover{K}^+$, which then will have $v_1$ and $v_2$ covered by its 
distillate $K_j$.
{Criterion 2:}
if, instead, $\pr{v_1}{v_2} \in \Seloff{S}{\D}$ then  
the construction ensures that 
the distillate $\cover{K}$
of $\cover{K}^+$ 
will never have both $v_1$ and $v_2$ in the same clique:  suppose 
both $v_1$ and $v_2$ appear in clique
$K_j$, then there must be 
$\newvertexSet{A},\newvertexSet{B} \in  K^+_j$, where $\{v_1, v_2\} \in A \cup B$ (including the possibility that $A = B$).
Further, 
$E^+(\Selon{S}{\D})$ has 
no edge between  $\newvertexS{v_1}$ and  $\newvertexS{v_2}$ as 
$\Seloff{S}{\D}$ eliminates it
from both $E_{\cpy}(\Selon{S}{\D})$ and $E_{\extra}(\Selon{S}{\D})$.
But these two facts contradict
Property~\ref{prop:vertex-sets-quantify-all}.

Lastly, we show that $\cover{K}$ satisfies the subset of zipper constraints strictly interior to \D. 
Suppose that some zipper constraint 
$\zc{U_\ell}{W_\ell}{y_\ell} \in \zipcoll{Z}$ with \mbox{$U_\ell, W_\ell \in \D$} is violated in
$\cover{K}$.  
This means that the pair of vertices $U_\ell$ must appear within some clique (i.e., $U_\ell \subseteq K_i$) while $W_\ell$ does not 
(i.e., for all $K_j \in \cover{K}$, $W_\ell \not\subseteq K_j$).
As $\cover{K}$ is faithful and $K_i \in \cover{K}$, we know $U_\ell \in \Selon{S}{\D}$.
There are two cases for
$W_\ell \in \D$:
\begin{itemize}
\item [--] If $W_\ell = \pr{w_1}{w_2} \in \Selon{S}{\D}$, then 
there must be a corresponding vertex $\newvertex{w_1}{w_2} \in V^+(\Selon{S}{\D})$ from $V_\extra(\Selon{S}{\D})$. 
Since $\cover{K}^+$ is a cover, the $\newvertex{w_1}{w_2}$ is in at least one $K^+_j \in \cover{K}^+$. 
But then, following the distillation of $K^+_j$ into $K_j$, both $w_1$ and $w_2$ are within $K_j$, so $W_\ell \in K_j$, giving a contradiction.

\item [--] If $W_\ell \in \Seloff{S}{\D}$ then,
as $U_\ell\strictupstream W_\ell$, this contradicts
the fact that $\Selon{S}{\D}$ is  downstream enabled.
\end{itemize}%
\vspace*{-12pt}
\end{proof}
\else
\fi

Further, when cover $\cover{K}^+$ is minimal, then the 
preceding result can be 
strengthened, as we show next.

\begin{lemma}[Optimal faithful constraint satisfaction]
\label{lem:opt-faith}
Given the elements in 
Lemma~\ref{lem:faith},
if $\cover{K}^+$ is a \emph{minimal} clique cover of $G^{+}(\Selon{S}{\D})$, then, in addition 
to the properties in 
the previous lemma,
cover $\cover{K}$ of $G$
has:
\begin{itemize}
\item  $|\cover{K}| = |\cover{K}^{+}|$. 
\item There exists no clique cover  $\cover{K}_\hyp$ of $G$, faithful to  $\Selon{S}{\D}$, with  $|\cover{K}_\hyp| < |\cover{K}|$.
\end{itemize}
\end{lemma}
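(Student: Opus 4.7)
The plan is to chain together the distillation result (Lemma~\ref{lem:faith}) with the expansion result (Property~\ref{prop:un-distilate-cliques}) so that the minimality assumption on $\cover{K}^+$ for $G^{+}(\Selon{S}{\D})$ transfers to $\cover{K}$ on $G$. Both bullets will drop out by the same contradiction pattern: if a faithful cover of $G$ were strictly smaller than $\cover{K}^+$, expanding it would produce an even smaller clique cover of $G^+(\Selon{S}{\D})$, violating minimality.

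First I would establish $|\cover{K}| = |\cover{K}^+|$. Lemma~\ref{lem:faith} already supplies $|\cover{K}| \leq |\cover{K}^+|$ together with faithfulness of $\cover{K}$ to $\Selon{S}{\D}$. Feeding this faithful $\cover{K}$ into Property~\ref{prop:un-distilate-cliques} yields that $\cover{K}^\ast \coloneqq \{\expand{K_i}\mid K_i\in\cover{K}\}$ is a clique cover of $G^+(\Selon{S}{\D})$. Crucially, $|\cover{K}^\ast| = |\cover{K}|$ because expansion is injective on distinct cliques: the opening paragraph of the proof of Property~\ref{prop:un-distilate-cliques} shows that the distillate of $\expand{S}$ is always $S$, so two distinct $K_i$ cannot share an expansion. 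Hence a strict inequality $|\cover{K}| < |\cover{K}^+|$ would give $|\cover{K}^\ast|<|\cover{K}^+|$, contradicting the minimality of $\cover{K}^+$ on $G^{+}(\Selon{S}{\D})$. So equality holds.

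Next I would prove the second bullet by a parallel argument. Suppose, for contradiction, some clique cover $\cover{K}_\hyp$ of $G$ faithful to $\Selon{S}{\D}$ exists with $|\cover{K}_\hyp| < |\cover{K}|$. Faithfulness of $\cover{K}_\hyp$ is precisely the hypothesis that Property~\ref{prop:un-distilate-cliques} requires, so $\{\expand{K}\mid K\in\cover{K}_\hyp\}$ is a clique cover of $G^+(\Selon{S}{\D})$ whose cardinality equals $|\cover{K}_\hyp|$ (again by injectivity of expansion on distinct sets). Then $|\cover{K}_\hyp| < |\cover{K}| = |\cover{K}^+|$ contradicts $\cover{K}^+$ being a minimum clique cover of the augmented graph.

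The only subtle point—and the mild obstacle—is confirming injectivity of $\expand{\cdot}$ on the cover so that set sizes are preserved on both sides of each inequality. This is already implicit in the distillate-of-expansion identity proved at the start of Property~\ref{prop:un-distilate-cliques}, so no additional machinery is needed. With that observation noted once, both bullets follow by essentially the same minimality-transport argument, without any further case analysis on the zipper constraints or the structure of $\D$.
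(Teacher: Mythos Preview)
Your proposal is correct. For the second bullet it matches the paper's argument essentially verbatim: assume a smaller faithful $\cover{K}_\hyp$, expand via Property~\ref{prop:un-distilate-cliques}, and contradict minimality of $\cover{K}^+$.

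For the first bullet you take a different route. The paper argues directly inside $G^+(\Selon{S}{\D})$: if $|\cover{K}| < |\cover{K}^+|$ then two distinct $K^+_r, K^+_s$ share a common distillate $K_r = K_s$; using Property~\ref{prop:distilate-cliques} in both directions, their union $K^+_r \cup K^+_s$ is itself a clique, so replacing the two by this union yields a strictly smaller clique cover of $G^+(\Selon{S}{\D})$, contradicting minimality. You instead exploit the faithfulness of the distilled $\cover{K}$ (already granted by Lemma~\ref{lem:faith}), expand it back via Property~\ref{prop:un-distilate-cliques}, and use the distillate-of-expansion identity to get injectivity of $\expand{\cdot}$, hence $|\cover{K}^\ast| = |\cover{K}| < |\cover{K}^+|$. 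Both arguments are sound; yours has the virtue of uniformity (the same expand-and-contradict pattern handles both bullets), while the paper's is slightly more self-contained for the first bullet since it does not need to invoke faithfulness of $\cover{K}$ or the expansion machinery at that stage.
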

\ifmoveprooftoend
\begin{proof}
Suppose 
$|\cover{K}| \lneq |\cover{K}^+|$, but
this happens only when some
$K^+_r, K^+_s \in \cover{K}^+$, with 
$K^+_r \neq  K^+_s$, distill to $K_r = K_s$.
Were this the case, one may 
obtain a valid clique cover for $G^{+}(\Selon{S}{\D})$ 
by replacing the cliques $K^+_r$ and $K^+_s$ with the single set
$K^+_{rs} = K^+_r \cup K^+_s$. 

The union of identical unions (underlying Definition~\ref{def:distillation}) means that 
$K_{rs}^+$  distills to $K_r$ as well.
Applying Property~\ref{prop:distilate-cliques} (in `if' direction from $K^+_r$) means that $K_r$ is a clique.
And applying Property~\ref{prop:distilate-cliques} (in the `only if' direction from $K_r$) means that $K^+_{rs}$ is a clique.
Notice that 
no vertex will be uncovered in
$\big\{K^+_{rs}\big\} \cup \cover{K}^+ \setminus \big\{K^+_r, K^+_s\big\}$, 
hence we have obtained smaller clique cover than the minimal one.

For the second claim, suppose some $\cover{K}_\hyp$, $|\cover{K}_\hyp| < |\cover{K}|$, 
is faithful to $\Selon{S}{\D}$. 
Then Property~\ref{prop:un-distilate-cliques}
indicates that 
$\cover{K}_\hyp^+$ can be found
such that it is a clique cover of 
 $G^{+}(\Selon{S}{\D})$.
Moreover, then $|\cover{K}_\hyp^+| \leq  |\cover{K}_\hyp| < |\cover{K}| \leq |\cover{K}^+|$, hence
$|\cover{K}_\hyp^+| < |\cover{K}^+|$,
which is a contradiction since 
$\cover{K}^+$ is assumed to be a minimal cover of 
 $G^{+}(\Selon{S}{\D})$.
\end{proof}
\else
\fi

Combining Algorithm~\ref{algo:enum}
with the preceding results, 
and picking $\D = \Zpairs$,
we already have an \fpt-algorithm for \mzccp. 
For each  prescription $\mathbb{S}^{i} \in  \enum(\prG{G})$, 
one constructs
$G_\struct{F}^{+}(\mathbb{S}^{i})$, then uses an \fpt-algorithm to
solve that  classical minimum clique cover. 
As per Lemma~\ref{lem:opt-faith}, one distills that 
cover into a zipper-constraint--satisfying cover for $G$; the smallest such cover\,---across all $\mathbb{S}^{i}$s---\,will be a solution to the problem. 
(This claim requires proof, but becomes a special case of a later result, by taking $\D = \Zpairs$ in Theorem~\ref{thrm:Dis}.)
Next, an improved algorithm, which takes more care to
pick a (potentially) smaller $\D$ will be presented.

\section{Repairable constraints}
\label{sec:repairable}

We may be able to pick $\D$ as a strict subset of $\Zpairs$ if 
there are zipper constraints which, though they may be violated during the covering process, can be resolved thereafter. The next lemma, making use of Definition~\ref{def:nc}, will show this:

\begin{lemma}
\label{lem:repairR}
Given a graph $G$ and an associated $\Zpairs$ from zipper constraint collection $\zipcoll{Z}$,
let $\R \subseteq \Zpairs$ be a set of pairs such that
for every pair \pr{u}{w} in $\R$, $u$ and $w$ have \cn.  If $\uR \subseteq \R$ then
let $\srcSet{\uR} \coloneqq \{\zc{U_i}{W_i}{y_i} \in Z_\struct{F} \;|\; U_i\in \uR\}$ 
and $\destSet{\uR} \coloneqq \{\zc{U_j}{W_j}{y_j} \in Z_\struct{F} \;|\; W_j\in \uR\}$.
If $\cover{K}$ is a cover for  $G$, then there is a cover $\widetilde{\cover{K}}$, no larger than $\cover{K}$, such  that:
\begin{enumerate}

\item  $\widetilde{\cover{K}}$ is a clique cover of $G$ that satisfies the specific zipper constraints
$\srcSet{\uR} \cap \destSet{\uR}$.\label{item:repaired_inside}
\item If  
$\cover{K}$ satisfies the zipper constraints $C \subseteq \zipcoll{Z}$,
and 
$\widetilde{\cover{K}}$ satisfies the zipper constraints $\widetilde{C} \subseteq \zipcoll{Z}$,  and
 then
${C} \setminus \widetilde{C} \subseteq 
\srcSet{\uR} \setminus \destSet{\uR}$.\label{item:only_breaks_boundary}
\end{enumerate}
\end{lemma}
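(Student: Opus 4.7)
The plan is to build $\widetilde{\cover{K}}$ from $\cover{K}$ through a sequence of local modifications, each merging one pair of $\uR$ into a single clique while preserving the cover size. The enabling fact is the comparable-neighborhoods property: for $\pr{u}{w} \in \uR$ with, without loss of generality, $\neighbor{G}{u} \subseteq \neighbor{G}{w}$, every clique $K$ containing $u$ satisfies $K \subseteq \neighbor{G}{u} \subseteq \neighbor{G}{w}$, so $K \cup \{w\}$ is again a clique. Initialize $\widetilde{\cover{K}} \leftarrow \cover{K}$; for each $\pr{u}{w} \in \uR$ not yet contained in any single clique of $\widetilde{\cover{K}}$, select some $K \ni u$ and replace it by $K \cup \{w\}$. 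Both the covering property and the cover size are preserved throughout.

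Claim~\ref{item:repaired_inside} follows directly: after processing, every pair of $\uR$ lies in some clique of $\widetilde{\cover{K}}$, so for any constraint $\zc{U}{W}{y} \in \srcSet{\uR} \cap \destSet{\uR}$, the destination $W \in \uR$ is covered and the constraint is satisfied. For claim~\ref{item:only_breaks_boundary}, observe that the modifications only enlarge cliques, so any pair covered in $\cover{K}$ remains covered in $\widetilde{\cover{K}}$. Hence a constraint $\zc{U}{W}{y} \in C \setminus \widetilde{C}$ must have $W$ uncovered in $\widetilde{\cover{K}}$ (and thus in $\cover{K}$ as well), and $U$ must be newly covered. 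Since every pair of $\uR$ ends up covered, $W \notin \uR$, placing the constraint outside $\destSet{\uR}$.

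The subtle point I expect to wrestle with is that the newly-covered source $U$ must itself belong to $\uR$ for the constraint to lie in $\srcSet{\uR}$. In the simple case, $U$ equals the intended pair $\pr{u}{w}$ from the repair step that produced its coverage, which is in $\uR$. The delicate case is an ``incidental'' activation $U = \pr{a}{w}$ for $a \in K \setminus \{u\}$, whose pair need not be in $\uR$. I would handle this by refining each repair step: first split $K$ into $\{u\}$ and $K \setminus \{u\}$, and then attach $w$ only to $\{u\}$, so that the sole newly-covered pair is the intended $\pr{u}{w}$. The single extra clique from the split can be reclaimed by consolidating a destination pair whose endpoints are split across two singleton cliques into a single 2-clique (destinations in $\Zpairs$ correspond to edges of $G$, so such a 2-clique is always available), with a global accounting over all repair steps that preserves $|\widetilde{\cover{K}}| \leq |\cover{K}|$.
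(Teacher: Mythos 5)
Your first two paragraphs reproduce the paper's own argument, up to immaterial differences: the paper iterates only over $\interiorZ$, the destination pairs of constraints in $\srcSet{\uR}\cap\destSet{\uR}$ (rather than over all of $\uR$), and when repairing $\pr{u}{w}$ with $\neighbor{G}{u}\subseteq\neighbor{G}{w}$ it adds $w$ to \emph{every} clique containing $u$ rather than to a single one. The load-bearing steps are identical: \cn gives cliqueness of $K\cup\{w\}$, cliques only grow so coverness and the size bound are preserved, item~\ref{item:repaired_inside} holds because every relevant destination pair ends up covered, and item~\ref{item:only_breaks_boundary} is argued by monotonicity of coverage plus the fact that a newly violated constraint must have a newly covered source.

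The ``incidental activation'' you flag (a pair $\pr{a}{w}$, $a\in K\setminus\{u\}$, becoming newly covered although it need not lie in $\uR$) is a genuine subtlety, but note that the paper does not resolve it either: its proof simply asserts that the only pairs whose coverage changes are those in $\interiorZ$, which passes over exactly these pairs. So your unrefined construction is at parity with the published argument, while your attempted patch is where the proposal actually breaks. Splitting $K$ into $\{u\}$ and $K\setminus\{u\}$ and attaching $w$ only to $\{u\}$ (i) increases the clique count, and the proposed reclamation---merging a destination pair whose endpoints sit in two singleton cliques---is unsupported: nothing guarantees such singleton cliques exist in the current cover, so the required bound $|\widetilde{\cover{K}}|\leq|\cover{K}|$ is not established; and (ii) more seriously, the split removes $u$ from $K$, de-covering the pairs $\pr{u}{a}$ for $a\in K\setminus\{u\}$, so coverage is no longer monotone. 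That undermines your own argument for item~\ref{item:only_breaks_boundary} (a constraint whose \emph{target} $\pr{u}{a}$ loses coverage while its source remains covered becomes violated, and its source need not be in $\uR$), and a later split can even destroy the 2-clique $\{u,w\}$ created by an earlier repair step, threatening item~\ref{item:repaired_inside}. As written, the refinement makes the proof weaker than the coarse version; either state the argument as the paper does, or find a fix that never removes a vertex from a clique.
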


The intuition is that we can \emph{repair} $\cover{K}$, without increasing its size, to ensure that those zipper constraints wholly in $\uR$ will hold (item~\ref{item:repaired_inside}).
This process can have an unfortunate side-effect of breaking some zipper constraints which held formerly: but those are only the
zipper constraints that `depart' $\uR$, i.e., $\srcSet{\uR} \setminus \destSet{\uR}$ (item~\ref{item:only_breaks_boundary}).
\ifmoveprooftoend
\begin{proof}
Let $\cover{K} = \{K_1, K_2, \dots, K_m\}$.
Collect all the interior pairs $\interiorZ \coloneqq \{ W_i \in \ZpairsT \;|\; \zc{U_i}{W_i}{y_i} \in \srcSet{\uR} \cap \destSet{\uR} \}$. 
Starting with cover $\cover{K}\iterIdx{0} = \cover{K}$,
we iterate over collection $\interiorZ$ and modify the cover incrementally.
Form $\cover{K}\iterIdx{i}$ from  $\cover{K}\iterIdx{i-1}$ by taking the $i^{\rm th}$ pair $\pr{u}{w}$ from $\interiorZ$ and doing the following:
if $\neighbor{G}{u}\subseteq \neighbor{G}{w}$ then, first, copy those $K_\ell \in \cover{K}\iterIdx{i-1}$ not containing $u$ to $\cover{K}\iterIdx{i}$;
next, gather those $K_j \in \cover{K}\iterIdx{i-1}$ containing $u$ and place $K_j \cup \{w\}$ in $\cover{K}\iterIdx{i}$.
Otherwise, $\neighbor{G}{u}\supseteq \neighbor{G}{w}$, so do the same two operations but reverse the roles of $v$ and $w$.
Once this iteration has been completed, take $\widetilde{\cover{K}} = \cover{K}\iterIdx{|\interiorZ|}$.

In the construction above,  `coverness' must be preserved as the sets in $\cover{K}\iterIdx{i}$ only grow with each subsequent $i$. 
Because, when $w$ is added to a clique containing $v$, the former must already be compatible with all those vertices in the clique\,---via the \cn property---\,the set $K_j \cup \{w\}$ is a clique too. 
Also, every zipper constraint in  $\srcSet{\uR} \cap \destSet{\uR}$ is satisfied because every destination pair of each such zipper constraint will appear in some clique in $\widetilde{\cover{K}}$.
Further, $|\widetilde{\cover{K}}| \leq  m$.

For the second property, notice that in $\widetilde{\cover{K}}$ the only pairs that have changed are those in $\interiorZ$;
they have been altered by including some vertices in a common clique, where before they had been separated.
But this change can only alter the satisfaction of constraints for which those pairs act as sources, viz. $\srcSet{\uR}$. So
${C} \setminus \widetilde{C} \subseteq \srcSet{\uR}$. But, as just established, those elements in 
$\srcSet{\uR} \cap \destSet{\uR}$ are satisfied, so  ${C} \setminus \widetilde{C}$ cannot include pairs in $\destSet{\uR}$, thus the claim follows.
\end{proof}
\else
\fi

The preceding shows that zipper constraints with both ends in a set $\uR \subseteq \R$ which
possesses \cn, need not cause any trouble. 
Our prior discussion, using  downstream-enabled prescriptions, allows one to deal with constraints
entirely outside of $\R$. 
However, a remaining difficulty is that some constraints may straddle the two sets.
We put $\R$ aside briefly, returning to it again in 
Lemma~\ref{lemma:extension-of-zippercs-if-d-less-repairable}
and subsequent theorems, as we now introduce extra machinery for the liminal constraints.

Broadly speaking, the preceding shows that rather than taking $\D = \Zpairs$ we can avoid having to include the
comparable-neighborhood pairs in the enumeration. This idea is close to being correct, but we need to ensure $\D$ will handle the liminal
constraints correctly as well. To do this, 
the idea will be to expand the domain $\D$ to embrace some 
additional pairs. 
(The additional pairs are those, when interpreted back in the filter, whose merger or non-merger is entailed from the choices made in a given prescription on~$\D$.)

\begin{construction}[Prescription Boundary Inclusion]
\label{const:bound-inclusion}
Given a prescription $\Selon{S}{\D}$, we
modify it by increasing 
$\Selon{S}{\D}$ and $\Seloff{S}{\D}$.
This is achieved, firstly, by modifying its domain and then, secondly, by selecting some additional elements, which transforms
it into a new prescription.
To do so, define \gobble{two }sets:
\begin{enumerate}
\item 
Upstream vertex pairs of the 
\offpairs should be treated as if they were turned
\off too, i.e., prohibited from being in a clique together 
(the constraint cannot be satisfied otherwise);
let $\mathcal{F}^\off_\D(\Selon{S}{\D}) = \{P_a \in \Zpairs\setminus\D\;|\;\exists P_b\in \D \setminus \Selon{S}{\D},  P_a\strictupstream P_b\}$.
\item 
Downstream vertex pairs of the 
\onpairs should be turned
\on as well, i.e. must be in some clique together;  
let $\mathcal{F}^\on_\D(\Selon{S}{\D}) = \{P_c \in \Zpairs\setminus\D\;|\;\exists P_d\in \Selon{S}{\D},  P_d\strictupstream P_c\}$.
\end{enumerate}
Construct a derived prescription by expanding the domain and \onpairs as
\begin{equation*}
\eD \coloneqq \D \cup \mathcal{F}^\off_\D(\Selon{S}{\D}) 
\cup \mathcal{F}^\on_\D(\Selon{S}{\D}),\text{ and }
\eSelon{S}{\eD}\coloneqq \Selon{S}{\D} \cup \mathcal{F}^\on_\D(\Selon{S}{\D}), 
\end{equation*}
where the \onpairs have 
grown to include those in $\mathcal{F}^\on_\D$. And, as before, 
the \offpairs are those in  $\eD \setminus \eSelon{S}{\eD}$;
and the prescription is silent about the elements outside $\eD$.
\end{construction}

Caution: in  $\eSelon{S}{\eD}$ we have lightened the notation by eliding the 
dependence of $\eD$ on $\Selon{S}{\D}$. Care is warranted because $\eD$ cannot be constructed from $\D$ alone---different prescriptions will give different $\eD$s.

\begin{property}
\label{prop:downstream-preserved-under-boundary-inclusion}
If prescription $\Selon{S}{\D}$ is downstream enabled
then  $\eSelon{S}{\eD}$ is downstream enabled.
\end{property}
\begin{proof}
We prove a slightly stronger statement, which is that
$(P_a \in \eSelon{S}{\eD}) \wedge (P_a \strictupstream P_b)\implies (P_b\in \eSelon{S}{\eD})$.
Given that antecedent, we have
$P_a \in \Selon{S}{\D}$ or $P_a \in \mathcal{F}^\on_\D(\Selon{S}{\D})$,
since $\eSelon{S}{\eD} = \Selon{S}{\D} \cup \mathcal{F}^\on_\D(\Selon{S}{\D})$. Then we need to show
\begin{itemize}
\item [1.] $(P_a \in \Selon{S}{\D}) \wedge (P_a \strictupstream P_b)\implies (P_b\in \eSelon{S}{\eD})$ and
\item [2.] $(P_a \in \mathcal{F}^\on_\D(\Selon{S}{\D})) \wedge (P_a \strictupstream P_b)\implies (P_b\in \eSelon{S}{\eD})$.
\end{itemize}
For the first, $P_b$ is certainly in $\eSelon{S}{\eD}$: 
When $P_b\not\in \D$ then Construction~\ref{const:bound-inclusion} ensures
$P_b\in \mathcal{F}^\on_\D(\Selon{S}{\D})$;
alternatively, 
when $P_b\in \D$ then $P_b\in \Selon{S}{\D} \subseteq \eSelon{S}{\eD}$
due to the original prescription being downstream enabled.

For the second,  
the definition of $\mathcal{F}^\on_\D(\Selon{S}{\D})$ means there is 
some pair  $P_d \strictupstream P_a$ with $P_d \in \Selon{S}{\D}$.
Transitivity and $P_a \strictupstream P_b$ means 
$P_d \strictupstream P_b$.
Again, $P_b$ is certainly in $\eSelon{S}{\eD}$, using the argument
above but with $P_d$ fulfilling the role of $P_a$ before.
\end{proof}

(Note, as per the discussion at the end of Section~\ref{section:covers}, when
 $\D = \Zpairs$, 
then $\eD = \D = \Zpairs$, and
we have 
$\eSelon{S}{\eD} =
\eSelon{S}{\D} =
\eSelon{S}{\Zpairs}$ so long as 
$\eSelon{S}{\D}$ was downstream enabled.)

\section{Main result: \fpt algorithm}

The paper's key algorithm just assembles all the pieces presented up to this point; it appears in Algorithm~\ref{algo:fpt}. The following lemma and theorem provide its correctness, while the
final corollary gives the parameterized running-time.

\floatname{algorithm}{Algorithm}
\vspace*{0.2ex}
\begin{algorithm}
{
\caption{Solve \mzcc{G}{\zipcoll{Z}}\label{algo:fpt}}
\renewcommand{\algorithmicrequire}{\textbf{Input:}}
\renewcommand{\algorithmicensure}{\textbf{Output:}}
\begin{algorithmic}[1]
\REQUIRE{Compatibility graph $G$, zipper constraints \zipcoll{Z}}
\ENSURE{Clique cover $\cover{K}_{\text{best}}$ with minimum cardinality}\\[6pt]

\STATE Take  $\D = \Zpairs \setminus \R$
\STATE Initialize $\cover{K}_{\text{best}}\leftarrow\big\{\{v\}\mid v\in V(G)\big\}$
\FOR {$\Selon{S}{\D} \in \enum(\prG{G}(\D))$}
\STATE Form $\eD$ and $\eSelon{S}{\eD}$ from $\Selon{S}{\D}$ (via Construction~\ref{const:bound-inclusion})\label{line:start-loop}
\STATE Build graph $G^{+}(\eSelon{S}{\eD})$ (via Construction~\ref{const:augmentation})
\STATE $\cover{K}^+ \leftarrow$ \fxn{Find-min-clique-cover}$(G^{+}(\eSelon{S}{\eD}))$ \label{line:std-min-clique}
\STATE Distill $\cover{K}^+$ to $\cover{K}$ (via Definition~\ref{def:distillation})
\STATE Repair $\cover{K}$ with $\uR = \R \setminus \eD$ to give $\cover{K}^\dagger$ (Lemma~\ref{lem:repairR})
\label{line:penultimate-loop}
\STATE $\cover{K}_{\text{best}} \leftarrow 
\cover{K}^\dagger$ when 
$|\cover{K}^\dagger| < |\cover{K}_{\text{best}}|$
\ENDFOR
\RETURN $\cover{K}_{\text{best}}$
\end{algorithmic}
}
\end{algorithm}
\vspace*{0.4ex}

Figure~\ref{fig:R_and_D} helps to illustrate the relationships
between the subsets of $\Zpairs$ appearing in the algorithm: 
$\D$ and $\R$ partition $\Zpairs$, and so too do 
$\eD$ and $\uR$.
Two additional points are worth noting. 
Though the domains are taken as $\eD$ when $G^+$ is constructed, 
and $\eD$  is usually larger than $\D$, it is only
the downstream enabled prescriptions on $\D$ that are
enumerated.
Secondly, line~\ref{line:std-min-clique} 
constructs a minimal clique cover on $G^+$, which is unconstrained. 
This sub-problem, though \nphard, is fixed-parameter tractable, and we will use the specific method mentioned in Lemma~\ref{existing:fpt} for the subsequent analysis appearing below.

\begin{lemma}[Constraint satisfaction]
\label{lemma:extension-of-zippercs-if-d-less-repairable}
Given $G$ and an associated $\Zpairs$ from zipper constraint collection $\zipcoll{Z}$, 
let $\R \subseteq \Zpairs$ be a set of pairs such that
every pair in $\R$ has \cn. 
With $\D \supseteq \Zpairs \setminus \R$, let 
$\Selon{S}{\D}$ be a
downstream-enabled prescription on domain $\D$.
If $\cover{K}^+$ is a 
cover of $G^{+}(\eSelon{S}{\eD})$
that is
faithful to
$\eSelon{S}{\eD}$,
then 
$\cover{K}^+$
can be distilled
and
repaired 
so that every
constraint in 
$\zipcoll{Z}$ is satisfied.
\end{lemma}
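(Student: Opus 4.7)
The plan is to combine Lemma~\ref{lem:faith} with Lemma~\ref{lem:repairR} by using Construction~\ref{const:bound-inclusion} to control the interactions at the boundary between $\eD$ and $\uR \coloneqq \Zpairs \setminus \eD \subseteq \R$. First, I would check the hypotheses of Lemma~\ref{lem:faith}: the prescription $\eSelon{S}{\eD}$ is downstream-enabled by Property~\ref{prop:downstream-preserved-under-boundary-inclusion}, and $\cover{K}^+$ is given as a faithful cover of $G^+(\eSelon{S}{\eD})$. Applying the lemma distills $\cover{K}^+$ into a cover $\cover{K}$ of $G$ that is faithful to $\eSelon{S}{\eD}$ and satisfies every zipper constraint $\zc{U}{W}{y}$ with both $U, W \in \eD$.

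Next, I would classify the remaining zipper constraints by where their source and destination pairs lie, using the partition $\Zpairs = \eD \sqcup \uR$. For a constraint $\zc{U}{W}{y}$ with $U \in \eD$ and $W \in \uR$ (case (a)), the claim is that $U$ must be \off in $\eSelon{S}{\eD}$: if instead $U \in \Selon{S}{\D}$, then downstream-enabledness together with clause (2) of Construction~\ref{const:bound-inclusion} forces $W \in \Selon{S}{\D} \cup \mathcal{F}^\on_\D \subseteq \eD$, contradicting $W \in \uR$; and if $U \in \mathcal{F}^\on_\D \setminus \D$, transitivity of $\strictupstream$ reduces to the previous case via a witness $P_d \in \Selon{S}{\D}$ with $P_d \strictupstream U \strictupstream W$. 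Faithfulness then guarantees that $U$'s two vertices are not in any common clique of $\cover{K}$, so the constraint is vacuously satisfied. A dual argument (using clause (1) and $\mathcal{F}^\off_\D$) shows that for $U \in \uR$ and $W \in \eD$ (case (b)), $W$ must be \on in $\eSelon{S}{\eD}$, so some clique of $\cover{K}$ already contains both vertices of $W$.

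Finally, I would apply Lemma~\ref{lem:repairR} with $\uR = \R \setminus \eD$ (whose pairs all have \cn, being a subset of $\R$) to obtain $\cover{K}^\dagger$ of size at most $|\cover{K}|$ which additionally satisfies all constraints wholly within $\uR$. It remains to verify that no previously satisfied constraint is destroyed by the repair. By item~\ref{item:only_breaks_boundary} of Lemma~\ref{lem:repairR}, any broken constraint lies in $\srcSet{\uR} \setminus \destSet{\uR}$, which has source in $\uR$ and destination outside $\uR$, i.e., destination in $\eD$; these are exactly the case (b) constraints. But the repair procedure only enlarges existing cliques, never splitting them, so the on-destination pairs established in case (b) stay covered together in $\cover{K}^\dagger$, keeping those constraints satisfied. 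Combining the cases gives satisfaction of every constraint in $\zipcoll{Z}$. The main obstacle is the book-keeping at the $\eD$/$\uR$ boundary: the crux is showing that the augmentations $\mathcal{F}^\on_\D$ and $\mathcal{F}^\off_\D$ introduced by Construction~\ref{const:bound-inclusion} are exactly the right ``buffer'' so that every cross-boundary constraint either has an \off source or an \on destination, thereby dovetailing with the asymmetric guarantee provided by Lemma~\ref{lem:repairR}.
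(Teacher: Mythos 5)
Your proposal is correct and follows essentially the same route as the paper: apply Property~\ref{prop:downstream-preserved-under-boundary-inclusion} and Lemma~\ref{lem:faith} to handle constraints with both ends in $\eD$, apply Lemma~\ref{lem:repairR} with $\uR = \R\setminus\eD$ for constraints wholly in $\uR$, and dispose of the two cross-boundary cases by showing that Construction~\ref{const:bound-inclusion} forces an \off source (your case (a), the paper's type (4)) or an \on destination (your case (b), the paper's type (3)), using transitivity of $\strictupstream$ exactly as the paper does. Your only cosmetic deviation is establishing case (b) before the repair and noting that the repair only enlarges cliques, whereas the paper phrases the same point as "the repair can only break type (3), but such a violation is impossible"; the content is the same.
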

\begin{proof}
The repair process on the cover chooses $\uR = \R \setminus \eD$.
First, the collection of zipper constraints are of four types,
depending on whether the zipper constraints have both source and target in $\eD$ or not: 
(\emph{i}) zipper constraints with both source and target in
$\eD$,
(\emph{ii}) zipper constraints with
neither source nor target in $\eD$,
(\emph{iii}) zipper
constraints with only target in $\eD$, (\emph{iv}) zipper constraints with only source in $\eD$.
Next, we will show that all zipper constraints in the above types are
satisfied.  
The derived $\eSelon{S}{\eD}$ is downstream-enabled (according to Property~\ref{prop:downstream-preserved-under-boundary-inclusion}) and,
then, Lemma~\ref{lem:faith} applied on that prescription means that 
$\cover{K}$ (the distillation of $\cover{K}^{+}$) 
satisfies all zipper constraints internal to $\eD$.
Hence type (\emph{i}) are not violated.
The repair procedure ensures that 
type-(\emph{ii}) constraints hold (as type-(\emph{ii}) constraints fall entirely within $\uR$.) 
As per Lemma~\ref{lem:repairR}, 
the repair procedure may have the side-effect of altering some constraints so they no longer hold.
Those are constraints with source in that region
of $\R$ outside of $\eD$ and destinations in $\eD$, i.e.,
constraints of type (\emph{iii}). This is not a concern
as the source must be \on, and the destination must be \off.
The destination is downstream from the source.
Either the destination is itself in $\D$ or it is in $\eD\setminus \D$, but in this latter case, there must be a
downstream \off-pair in $\D$ which caused it to be created.
But then Construction~\ref{const:bound-inclusion} would
have placed the source into $\eD$, which contradicts the
criterion for being of type (\emph{iii}).
The argument for type (\emph{iv}) is symmetric: a violation
involves an upstream \on-pair either in $\D$ or $\eD \setminus \D$, where the latter case only arises from some
further upstream pair in $\D$ that is \on.
But then, as the destination of the type-(\emph{iv}) element is
a downstream of an \on pair, it must be \on (through  Construction~\ref{const:bound-inclusion}), contradicting
the criterion for being type (\emph{iv}).
Therefore, all zipper constraints in $\zipcoll{Z}$ are satisfied.
\end{proof}

\begin{figure}[t]
\centering
\includegraphics[width=1.00\linewidth, trim=435pt 10pt 10pt 0pt, clip]{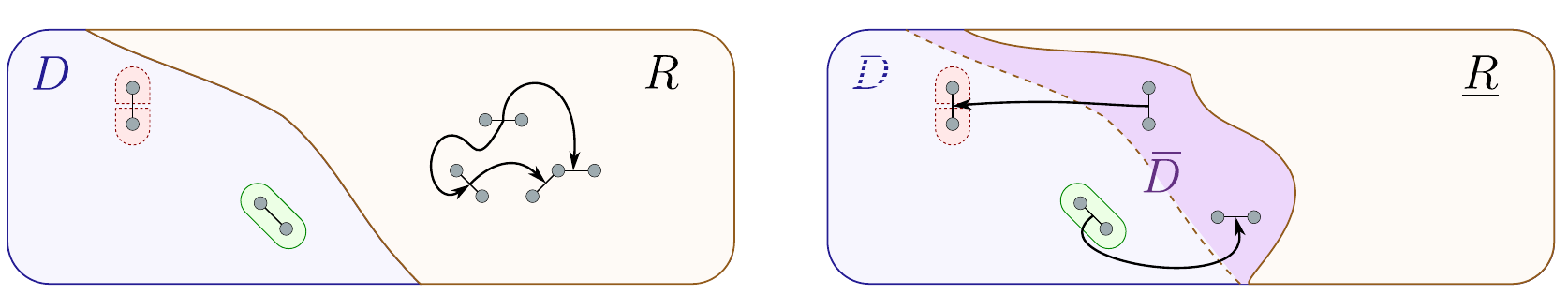}
\vspace*{-12pt}
\caption{Subsets of the set of $\Zpairs$ appearing in Algorithm~\ref{algo:fpt}.
First (on the left), $\Zpairs$ is partitioned into $\D$ and $\R$. 
Then (on the right), with some particular prescription given, $\D$ is grown to form $\eD$ by including those pairs which are upstream of \off elements and those pairs downstream of \on ones.
And $\uR$ is reduced by this same difference so  that
$\eD \cup \uR = \Zpairs$.\label{fig:R_and_D}}
\end{figure}

\begin{theorem}[Correctness]
\label{thrm:Dis}
Suppose $\cover{K}^\star$ is  a solution to \mzcc{G}{\zipcoll{Z}},
i.e., it is a minimum cardinality clique cover of
$G$ and satisfies $\zipcoll{Z}$. 
Then, there is a 
$\cover{K}^\dagger$, 
the repair of a distilled 
cover of 
a graph $G^{+}(\eSelon{S}{\eD})$,
constructed 
from $\eSelon{S}{\eD}$,
itself
being 
obtained as the boundary
inclusion of 
some 
downstream-enabled prescription 
from $\Selon{S}{\D}$ for
a domain $\D$ chosen
with $\D \supseteq \Zpairs \setminus \R$,
such that $|\cover{K}^\dagger| = |\cover{K}^\star|$.
\end{theorem}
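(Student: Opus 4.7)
The plan is to extract a prescription directly from the optimal solution $\cover{K}^\star$ and push it through the full pipeline (boundary inclusion, augmentation, expansion, distillation, repair), showing the final cover has the same size as $\cover{K}^\star$. I would take the tightest allowed domain, $\D = \Zpairs \setminus \R$, and define $\Selon{S}{\D} \coloneqq \{P \in \D \mid \exists K \in \cover{K}^\star,\, P \subseteq K\}$, declaring \on exactly those pairs in $\D$ that $\cover{K}^\star$ already merges. The first thing to verify is that this is downstream enabled: if $P_a \in \Selon{S}{\D}$ and $P_a \strictupstream P_b$ with $P_b \in \D$, then iterating the zipper constraints (which $\cover{K}^\star$ satisfies by assumption) along the $\up$-chain from $P_a$ to $P_b$ forces $P_b$ to be jointly covered by a clique of $\cover{K}^\star$, so $P_b \in \Selon{S}{\D}$.

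Next I would form $\eD$ and $\eSelon{S}{\eD}$ via Construction~\ref{const:bound-inclusion}, which stays downstream enabled by Property~\ref{prop:downstream-preserved-under-boundary-inclusion}. The main obstacle is then to verify that $\cover{K}^\star$ itself is \emph{faithful} to $\eSelon{S}{\eD}$. For the \on side, pairs in $\Selon{S}{\D}$ are jointly covered by definition, and pairs added via $\mathcal{F}^\on_\D(\Selon{S}{\D})$ lie strictly downstream of an on pair, so the same iterated zipper argument places them inside some clique of $\cover{K}^\star$. For the \off side, a pair in $\D \setminus \Selon{S}{\D}$ is not jointly covered by definition, and a pair $P_a \in \mathcal{F}^\off_\D(\Selon{S}{\D})$, if it were jointly covered, would by the zipper chain force a strictly-downstream off pair in $\D$ to be covered too, a contradiction. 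The same contradiction rules out $\mathcal{F}^\on_\D \cap \mathcal{F}^\off_\D$ being non-empty here, so $\eSelon{S}{\eD}$ is a well-defined prescription and $\cover{K}^\star$ genuinely respects it.

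Once faithfulness is established, Property~\ref{prop:un-distilate-cliques} lets me take $\cover{K}^{\star+} = \{\expand{K} \mid K \in \cover{K}^\star\}$ as a clique cover of $G^+(\eSelon{S}{\eD})$ with $|\cover{K}^{\star+}| = |\cover{K}^\star|$. Distilling $\cover{K}^{\star+}$ returns $\cover{K}^\star$ itself (the proof of Property~\ref{prop:un-distilate-cliques} observes that the distillate of $\expand{S}$ equals $S$), and applying Lemma~\ref{lem:repairR} with $\uR = \R \setminus \eD$ yields a repaired cover $\cover{K}^\dagger$ of size at most $|\cover{K}^\star|$. Finally, I would invoke Lemma~\ref{lemma:extension-of-zippercs-if-d-less-repairable} to conclude that $\cover{K}^\dagger$ satisfies every constraint in $\zipcoll{Z}$, hence is feasible for \mzcc{G}{\zipcoll{Z}}; the minimality of $\cover{K}^\star$ then gives $|\cover{K}^\dagger| \geq |\cover{K}^\star|$, and combined with the previous inequality this forces the desired equality.
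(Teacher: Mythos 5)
Your proposal is correct and proves the theorem as literally stated, but it takes a noticeably different route from the paper's. Both proofs begin identically: read off the prescription $\Selon{S}{\D}$ from $\cover{K}^\star$ by declaring \on those pairs that $\cover{K}^\star$ jointly covers. After that, the paths diverge. The paper's proof feeds the constructed prescription through lines \ref{line:start-loop}--\ref{line:penultimate-loop} of Algorithm~\ref{algo:fpt}, so its $\cover{K}^+$ is the \emph{minimal} cover of $G^+(\eSelon{S}{\eD})$, and it then invokes Lemma~\ref{lem:opt-faith} to show no faithful cover of $G$ is smaller than the distillate $\cover{K}$; the final size equality follows from the sandwich $|\cover{K}^{\dagger}| \leq |\cover{K}| \leq |\cover{K}^{\star}| \leq |\cover{K}^{\dagger}|$. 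Your proof instead \emph{expands} $\cover{K}^\star$ itself into a cover of $G^+$ via Property~\ref{prop:un-distilate-cliques}, observes that distilling this expansion returns $\cover{K}^\star$ unchanged, and closes the sandwich $|\cover{K}^\dagger| \leq |\cover{K}^\star| \leq |\cover{K}^\dagger|$ using the repair lemma plus minimality of $\cover{K}^\star$, bypassing Lemma~\ref{lem:opt-faith} entirely.

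Two comments on what each buys. First, your route is more careful where the paper is terse: you explicitly verify that the induced prescription is downstream-enabled, and—more importantly—that $\cover{K}^\star$ is faithful to the \emph{boundary-included} prescription $\eSelon{S}{\eD}$ (not merely to $\Selon{S}{\D}$), handling $\mathcal{F}^\on_\D$ and $\mathcal{F}^\off_\D$ separately and noting that their disjointness is a genuine consequence of $\cover{K}^\star$ satisfying $\zipcoll{Z}$. This matters because the graph constructed is $G^+(\eSelon{S}{\eD})$, so $\eSelon{S}{\eD}$ is the prescription the faithfulness machinery actually runs against; the paper's proof claims faithfulness only to $\Selon{S}{\D}$, which you correctly recognize as insufficient as stated. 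Second, because you use $\cover{K}^{\star+}$ rather than the algorithm's minimal cover, your argument establishes \emph{existence} of the advertised $\cover{K}^\dagger$ but does not by itself certify that Algorithm~\ref{algo:fpt} (which calls \fxn{Find-min-clique-cover}) produces a cover of that size. Closing that last gap is a one-liner—since $|\cover{K}^+_{\min}| \leq |\cover{K}^{\star+}| = |\cover{K}^\star|$ and the repaired distillate of any cover of $G^+(\eSelon{S}{\eD})$ is feasible by Lemma~\ref{lemma:extension-of-zippercs-if-d-less-repairable}, the algorithm's output is also sandwiched—but you should state it, because the theorem exists to underwrite the algorithm, not just an existence claim.
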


\begin{proof}
Having chosen some domain $\D \supseteq \Zpairs \setminus \R$, we 
use $\cover{K}^{\star}$ to construct a downstream-enabled selection.
For any pair $\pr{v}{w} \in \D$, if there is a clique in $\cover{K}^{\star}$ that
contains both $v$ and $w$, then add
$\pr{v}{w}$ to $\Selon{S}{\D}$. (This `turns' them \on; otherwise, as the pair is in $\D$, 
they'll be \off.)
This process, 
having followed Definition~\ref{def:faithfull}, means that $\cover{K}^{\star}$ is faithful to $\Selon{S}{\D}$.
Next, perform the operations in lines \ref{line:start-loop}--\ref{line:penultimate-loop} of  Algorithm~\ref{algo:fpt}.
As distillation and repair operations do not increase the cover size, thus
$|\cover{K}^{+}|\geq |\cover{K}| \geq |\cover{K}^{\dagger}|$. 

Using Lemma~\ref{lem:opt-faith}, there is no clique cover on $G$ 
that is faithful to $\Selon{S}{\D}$ and is
smaller than $|\cover{K}|$. Therefore, we have $|\cover{K}^\star| \geq |\cover{K}|$.
But $\cover{K}^{\dagger}$ satisfies all the zipper constraints (via Lemma~\ref{lemma:extension-of-zippercs-if-d-less-repairable}), and with $\cover{K}^\star$ being the smallest
such cover,  $|\cover{K}^{\star}| \leq |\cover{K}^{\dagger}|$.
Combining: $|\cover{K}^{\dagger}| \leq |\cover{K}| \leq |\cover{K}^{\star}| \leq |\cover{K}^{\dagger}|$.
\end{proof}

\begin{corollary}
\label{cor:expr}
Algorithm~\ref{algo:fpt} is fixed-parameter tractable, having 
complexity:
\[ f(\beta, m,\ell,\omega, d)\, n^{O(1)}\]
with $f(\beta, m,\ell,\omega, d) = (2+\ell)^\omega 2^{(\beta+d) (m+d) \log (m+d)}$,
where $n$ is the size of the input graph $G$, $\beta$ is the size of the largest clique in $G$,
$m$ is the number of cliques in the minimum clique cover of $G$,
$\ell$ and $\omega$ are the height and width of 
\mbox{$(\Zpairs /\!\!\cyceq, \strictupstream)$}, respectively, and 
$d = |\D| + 
|\{P \in \Zpairs\setminus\D\;|\;\exists Q\in \D,  P\strictupstream Q \text{ or } Q \strictupstream P\}|$. 
\end{corollary}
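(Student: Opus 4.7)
The plan is to split the complexity of Algorithm~\ref{algo:fpt} into the number of iterations of its outer loop and the per-iteration cost, verify each is bounded appropriately, and then multiply. For the iteration count, the loop runs once per downstream-enabled prescription returned by $\enum(\prG{G}(\D))$; since $\D \subseteq \Zpairs$, the poset $(\D/\!\cyceq,\strictupstream)$ has height at most $\ell$ and width at most $\omega$ (the parameters used in the statement are taken from $(\Zpairs/\!\cyceq,\strictupstream)$, which gives an upper bound). The enumeration bound stated just after Procedure~\ref{algo:enum} then yields at most $2^{\omega \log(2+\ell)} = (2+\ell)^\omega$ iterations.

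For the per-iteration work, boundary inclusion (Construction~\ref{const:bound-inclusion}), graph augmentation (Construction~\ref{const:augmentation}), distillation (Definition~\ref{def:distillation}), and repair (Lemma~\ref{lem:repairR}) all manipulate sets of size $O(n+z)$ in polynomial time; since $z = O(n^2)$, these contribute only an $n^{O(1)}$ factor. The dominant work is the unconstrained minimum-clique-cover call at line~\ref{line:std-min-clique}, for which I would invoke Lemma~\ref{existing:fpt} on $G^+(\eSelon{S}{\eD})$. That yields cost $2^{\beta^+ m^+ \log m^+}(n^+)^{O(1)}$, where $\beta^+$ and $m^+$ are the largest clique size and minimum clique cover size of $G^+(\eSelon{S}{\eD})$, and $n^+ \leq n + z$.

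The main obstacle is translating $\beta^+$ and $m^+$ back into the parameters $\beta$, $m$, $z$ of the statement. For $\beta^+$, Property~\ref{prop:distilate-cliques} guarantees that any clique in $G^+$ distills to a clique of $G' = (V, E \setminus \Seloff{S}{\eD})$, itself a subgraph of $G$ and therefore with at most $\beta$ vertices; hence the clique contains at most $\beta$ singletons drawn from $V_{\prior}$. Since its pair-valued entries all lie in $V_{\extra}(\eSelon{S}{\eD})$, whose cardinality is bounded by $|\Zpairs|=z$, the singleton and pair contributions sum to $\beta^+ \leq \beta + z$. For $m^+$, I would start with an optimal size-$m$ clique cover of $G$, split at most $z$ of its cliques to accommodate the at most $z$ edges removed in passing from $G$ to $G'$ (each split adding one clique), and then absorb or append the at most $z$ vertices of $V_{\extra}$, yielding $m^+ = O(m+z)$; the resulting constant-factor overhead in the exponent is of the shape $(m+z)\log(m+z)$ appearing in $f$.

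Multiplying the two factors yields $(2+\ell)^\omega \cdot 2^{(\beta+z)(m+z)\log(m+z)} \cdot n^{O(1)} = f(\beta,m,\ell,\omega,z)\, n^{O(1)}$. Every quantity in the exponent depends only on the structural parameters $\beta$, $m$, $\ell$, $\omega$, and $z$, not on $n$, so Algorithm~\ref{algo:fpt} is \fpt.
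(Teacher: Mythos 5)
Your proof takes essentially the same route as the paper's: bound the number of enumerated prescriptions by $(2+\ell)^\omega$, bound the parameters $\beta^+$ and $m^+$ of the augmented graph by $\beta+z$ and (roughly) $m+z$, then plug into Lemma~\ref{existing:fpt} and multiply. Two small remarks. First, you are actually a little more careful than the paper on the enumeration side: the paper's proof silently takes $\D = \Zpairs$, whereas the algorithm sets $\D = \Zpairs\setminus\R$; your observation that $(\D/\!\cyceq,\strictupstream)$ is a subposet of $(\Zpairs/\!\cyceq,\strictupstream)$ (hence its height and width are dominated by $\ell$ and $\omega$) is what one really needs. Second, your $m^+$ estimate is slightly looser: you charge up to $z$ for clique splits (from removed \off edges) and separately up to $z$ for new $V_{\extra}$ vertices, giving $m^+ = O(m + z)$. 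The paper instead notes that \off and \on pairs partition $\eD$, so each pair in $\Zpairs$ contributes at most \emph{one} extra clique---either via a split (if \off) or via a new vertex (if \on)---yielding the tighter $m^+ \leq m + z$ that actually appears in $f$. This does not affect the \fpt conclusion, only the exact exponent. (Incidentally, the paper's prose transposes $\beta$ and $m$ in its intermediate sentences; your assignment---$\beta$ for largest clique, $m$ for cover size---is the correct one, and matches the final formula.)
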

\begin{proof}
Take $\D$ to be the entire $\Zpairs$. Following Algorithm~\ref{algo:enum}, there are $(2+\ell)^{\omega}$ downstream-enabled prescriptions. For each prescription, we can construct an augmented graph. Compared to the original graph $G$, the augmented graph creates at most $d$ additional states and keeps the copies of incompatible states incompatible. In the worst case, the copies of the vertices in the largest clique of $G$ still remains fully connected in the augmented graph. Hence, the size of the largest clique in the augmented graph is at most $m+d$, where there are additionally $d$ new states to be covered by the largest clique. For each state pair $\pr{v}{w}$ in $\Zpairs$, if it is \off, then it requires at most one additional clique since you can no longer put these two states in the same clique. If it is \on, then it also requires at most one additional clique to cover the additional new states $\newvertex{v}{w}$, as the copies of both $v$ and $w$ are already covered by the clique cover of size $\beta$. Therefore, the size of the minimum clique cover for the augmented graph is at most $\beta+d$. Hence, the computational complexity to find the minimum clique cover for each augmented graph is $2^{(\beta+d) (m+d) \log (m+d)}O(n)$ following Lemma~\ref{existing:fpt}. Together, the complexity for Algorithm~\ref{algo:fpt} is $(2+\ell)^{\omega}2^{(\beta+d) (m+d) \log (m+d)}O(n)$.
\end{proof}

Notice that the approach does not depend upon any particular details of
the \fpt-algorithm employed to find the traditional clique cover. 
For  Corollary~\ref{cor:expr}'s
precise expression of $f$,  we use Lemma~\ref{existing:fpt} as a specific example, 
and modifications for the $G^+$ graphs add only $d$ terms to upper bound the parameters.
In a sense, we can see the compositionality of
the \fpt theory:  in order to 
account for the enumeration,
the zipper constraints themselves contribute to the complexity via the $(2+\ell)^\omega = 2^{\omega \lg(2 + \ell)}$ factor. 

\section{Conclusion and outlook}

It is unclear whether the algorithm we have presented
is of particular practical value. Given past successes
with ILP- and SAT-based formulations, and the vast body of active work on improving solvers of
those sorts, they may well outperform direct treatment via clique covers on graphs.  
Nevertheless, what the present algorithm \emph{does} provide is some deeper understanding of the fact that the constraints to enforce determinism play a role in making
the problem hard.
To gain further insight, one might look at regularity which 
affects the down-/upstream relationship between
zipper constraints, and examine its impact on
the chains and anti-chains that result.
Under the usual interpretation of \fpt,
such regularity 
leads one to identify classes of tractable instances with complexity characterized by constant parameters.  
These instances are efficient to solve
when scaling the problem while keeping those parameters fixed.
Finally, the notion of repairability  in  \cite{zhang23general} has definitely been sharpened within the present paper, though, unlike that work,
our emphasis has not been on  the direct structural aspects of the
compatibility graph.

\vspace*{12pt}
\noindent\textbf{Acknowledgements:}
We thank the anonymous referees for their close reading of the manuscript,
and acknowledge the support of the Office of Naval Research under Award \#N00014-22-1-2476.

\renewcommand{\UrlFont}{\ttfamily\scriptsize}
\bibliographystyle{plain}
\bibliography{mybib}

\end{document}